\documentclass[lettersize,journal]{IEEEtran}

\usepackage[utf8]{inputenc}
\usepackage[T1]{fontenc}
\usepackage{amsmath,amssymb,amsthm,mathtools}
\usepackage{bm}
\usepackage{graphicx}
\usepackage{booktabs}
\usepackage{array}
\usepackage{multirow}
\usepackage{xcolor}
\usepackage{enumitem}
\usepackage{cite}
\usepackage{stfloats}
\usepackage{algorithm}
\usepackage{algpseudocode}
\usepackage[colorlinks=true,linkcolor=blue!60!black,citecolor=blue!60!black,urlcolor=blue!60!black]{hyperref}
\usepackage{cleveref}
\crefname{figure}{Fig.}{Figs.}
\Crefname{figure}{Fig.}{Figs.}

\newtheoremstyle{paper-plain}%
  {1.0\baselineskip}{0.5\baselineskip}%
  {\itshape}{}{\bfseries}{.}{0.5em}{}
\theoremstyle{paper-plain}
\newtheorem{theorem}{Theorem}
\newtheorem{proposition}[theorem]{Proposition}
\newtheorem{lemma}[theorem]{Lemma}

\theoremstyle{definition}
\newtheorem{definition}[theorem]{Definition}

\usepackage{xcolor}

\newcommand{\mat}[1]{\mathbf{#1}}
\DeclareMathAlphabet\mathbfcal{OMS}{cmsy}{b}{n}

\begin{document}

\title{DeepOHeat-v2: Self-Improving Operator Learning for\\ Fast and Trustworthy
       Thermal Optimization in 3D-IC Design}

\author{Xinling Yu, Yixing Li, Ziyue Liu, Xin Ai, Zhiyu Zeng, Hai Li, and Zheng Zhang%
\thanks{Xinling Yu and Zheng Zhang are with the Department of Electrical and Computer Engineering, University of California, Santa Barbara, CA 93106 USA (e-mail: xyu644@ucsb.edu; zhengzhang@ece.ucsb.edu).}%
\thanks{Ziyue Liu is with the Department of Computer Science, University of California, Santa Barbara, CA 93106 USA (e-mail: ziyueliu@ucsb.edu).}%
\thanks{Yixing Li, Xin Ai, and Zhiyu Zeng are with Cadence Design Systems, Austin, TX 78759 USA (e-mail: yixingli@cadence.com; nathanai@cadence.com; zzeng@cadence.com).}%
\thanks{Hai Li is an independent researcher (e-mail: hai.li.pnw@gmail.com).}}


\maketitle

\begin{abstract}
Thermal-aware optimization of multi-die 3D integrated circuits evaluates many
designs, each a costly heat-equation solve. Operator-learning surrogates replace
this solve with a fast forward pass, ideally trained from physics alone, without
labeled data. DeepOHeat-v1 made such surrogates fast and trustworthy, but only on
low-contrast geometries. High-contrast multi-die stacks break it in two ways:
discontinuous conductivities make the continuous physics loss ill-defined at
material interfaces, and ill-conditioning
($\kappa_2(\mat{A}_h)\!\approx\!6\!\times\!10^4$) puts the discretized strong-form
loss beyond first-order optimization. We propose DeepOHeat-v2 to overcome both.
First, we train on a discretized physics loss that handles the discontinuities
natively; its energy form reduces the prediction-space loss-Hessian conditioning
from $\kappa^2$ to $\kappa$, and a matrix-preconditioned optimizer cuts the mean
peak temperature error from over $30$~K to $0.55$~K. Second, because optimization leaves the training
distribution, we propose a self-improving framework: a hotspot trust gate sends flagged placements to a
reference solver, and the surrogate incrementally retrains on the refined
solutions, keeping an update only when it improves held-out validation error. On a multi-die benchmark,
the surrogate--true peak gap on the returned design falls from $1.12$~K to
$0.11$~K, matching a solve-at-every-step optimizer while running $56\times$ faster. 
\end{abstract}

\begin{IEEEkeywords}
3D-IC thermal analysis, thermal optimization, operator learning,
physics-informed learning.
\end{IEEEkeywords}

\section{Introduction}
\label{sec:intro}

\IEEEPARstart{T}{hree}-dimensional integrated circuits (3D-ICs) stack multiple active dies to achieve integration densities and performance beyond the limits of planar scaling. However, this vertical integration also concentrates power within a small volume and increases the thermal path length, making heat dissipation far harder~\cite{tavakkoli2016thermal,cao2019survey,iyer2016heterogeneous}. The
resulting temperature rise and on-chip thermal gradients degrade
timing, reliability, and lifetime and increase leakage, so thermal analysis and
thermal-aware optimization have become standard steps of the 3D-IC design
flow. Both reduce to the same computation: predicting the temperature profile of a
candidate design, once for every design the optimizer visits. Traditional high-fidelity numerical
solvers (e.g., finite element and finite difference methods) of the heat equation~\cite{li2004fullchip,liu2014tsv,sultan2019survey}
are accurate but far too slow to call repeatedly inside such a loop; the thermal
solver is the computational bottleneck, and a fast, accurate surrogate is the way
around it.

Neural-network surrogates can predict temperature fields in milliseconds once
trained. Early data-driven models learn this mapping from large sets of
solver-generated
examples~\cite{ranade2022thermal,wen2020dnn,smith2023realtime,yang2026real}, but
their accuracy depends on how well the training set covers the design space, and
generating that set is often computationally prohibitive. Operator
learning~\cite{lu2021deeponet,li2021fno,yu2024separable,wu2024transolver}
provides a more general formulation: a neural operator approximates the solution
map from a design configuration to the corresponding temperature field, so a
new design requires only a forward pass. An operator network can also be
trained \emph{from physics alone}, using a physics-informed
loss~\cite{raissi2019pinn} that supervises it directly from the governing heat
equation and removes the need for labeled solver
data~\cite{lu2024fast,wu2026data}. The DeepOHeat
series~\cite{liu2023deepoheat,yu2025deepoheat} brought this data-free,
physics-informed operator-learning approach to 3D-IC thermal analysis. Its most recent version, DeepOHeat-v1, makes this
practical with separable training~\cite{yu2024separable} and Kolmogorov--Arnold trunks~\cite{liu2024kan} for efficiency and accuracy, and
a confidence-gated solver refinement for trustworthiness, but has so far been
demonstrated only on mild, low-contrast geometries.

The first challenge is training itself: realistic multi-die stacks break this
data-free approach on two counts. Their conductivities are discontinuous, changing abruptly both between
material layers and at the walls of the copper through-silicon vias (TSVs) that
pass through the dies; the continuous strong-form physics-informed loss used by DeepOHeat-v1 then cannot represent the
temperature field across such interfaces. PI-ONet~\cite{sha2025pi} instead adds explicit
interface-continuity terms to the loss and homogenizes each TSV into an
equivalent-conductivity block; but a realistic stack has many interfaces, so the
number of interface terms grows quickly, and every added loss term needs a
carefully tuned weight to train stably~\cite{wang2021pinn}. The conductivity contrast also makes the discretized operator ill-conditioned,
and the squared strong-form residual loss inherits the \emph{square} of that
conditioning; first-order optimization then effectively cannot minimize
it~\cite{deryck2024operator,rathore2024challenges}. Data-driven operators
such as ARO~\cite{wang2024aro} avoid both issues but require large labeled
datasets, as does PI-ONet's hybrid loss. No existing method is data-free
and robust to both problems.

The second challenge appears once a trained surrogate is placed inside the
optimization loop. The surrogate is pretrained on one distribution of randomly generated designs,
but the optimizer queries it on the evolving designs it visits during the
search, where the surrogate has seen little data and can be inaccurate.
DeepOHeat-v1 mitigates this with a confidence-gated hybrid scheme: a
residual-based check accepts a prediction when it appears trustworthy and
otherwise refines it with a warm-started generalized minimal residual (GMRES)
solve~\cite{saad1986gmres}, recovering solver-level accuracy at a fraction of
the cost. DeepOHeat-v1's scheme keeps each evaluation accurate, but the surrogate itself
{\it never improves}: every difficult design must be re-solved, and on such a stack
the confidence signal that decides when to refine becomes unreliable. The resulting refined solutions
 are the high-fidelity,
on-distribution data the surrogate lacks, yet DeepOHeat-v1 discards them.
Our self-improving framework reuses these verified solutions as training data: the
surrogate improves on the designs the search visits, and the loop calls the solver
{\it progressively less}.

To address these challenges, we develop \emph{DeepOHeat-v2}. Its main
contributions are:
\begin{enumerate}[leftmargin=*]
\item \textbf{Data-free physics-informed training for heterogeneous stacks.} We
train the surrogate on a \emph{discretized} physics loss, which handles the
discontinuous conductivities of the layers and TSVs natively, with no
interface-condition terms and no TSV homogenization. Casting this loss in an
\emph{energy form}~\cite{e2018deepritz} provably reduces its prediction-space
loss-Hessian condition number from $\kappa^2$ to $\kappa$. Since this $\kappa$ alone still
stalls first-order optimization, we minimize the energy loss with a
matrix-preconditioned optimizer~\cite{liu2026muon} that rescales each update to
counteract the residual ill-conditioning.

\item \textbf{A self-improving operator-learning framework.} We close a loop
between optimization and training: when the trust gate flags a prediction during
the search, the solver refines it; the surrogate trains incrementally on these
verified in-trajectory solutions; and a held-out check decides which model to keep
(pre-update, updated, or weight-averaged~\cite{wortsman2022soup}). This makes the surrogate
accurate where the search operates, and because it keeps improving along the
trajectory, the loop calls the reference solver progressively less as the run
proceeds. 

\item \textbf{Validation on a high-contrast 3D-IC benchmark.} On a face-to-face
chiplet stack with an $800\times$ conductivity contrast, data-free training reaches a
mean peak temperature error of $0.55$~K, versus $>\!30$~K for the strong-form loss;
the hotspot trust gate catches $64\%$ and $92\%$ of the $>\!0.5$~K and $>\!1$~K
prediction errors at a $30\%$ flag rate, where the DeepOHeat-v1 residual criterion is
no better than random; and the self-improving loop attains oracle placement quality
($0.11$~K surrogate--true gap) in $292$~s, $56\times$ faster than solving at every step
and with about half the solver calls of the non-adaptive DeepOHeat-v1 loop. 
\end{enumerate}

\section{Background and Motivation}
\label{sec:background}

\subsection{Thermal-Aware Design Optimization}
\label{sec:obj}

Thermal-aware design optimization seeks a chip design whose hottest junction is
as cool as possible. A design configuration $\mat{u} \in \mathcal{U}$ gathers the
tunable parameters that shape a chip's heat generation and conduction, such as
block placement, per-block power, or layer materials. It induces a temperature field $T_{\mat{u}}$, whose peak we denote $T_{\mathrm{peak}}(\mat{u})$. The problem is
\begin{align}
\label{eq:obj}
\min_{\mat{u} \in \mathcal{U}} T_{\mathrm{peak}}(\mat{u})
\quad \text{subject to } \mat{u} \text{ feasible}.
\end{align}
Feasibility encodes the design rules of the instance, such as non-overlapping,
in-bounds blocks. Problem \eqref{eq:obj} is solved
by an outer-loop optimizer that proposes configurations and evaluates
$T_{\mathrm{peak}}(\mat{u})$ at each step. Every evaluation requires a full thermal solve, so
the solver dominates the optimization cost.

Evaluating $T_{\mathrm{peak}}(\mat{u})$ requires solving the heat equation on
the chip domain $\Omega \subset \mathbb{R}^3$. With spatial coordinate $\mat{y}$,
thermal conductivity $k(\mat{y})$ that is piecewise constant across material
regions, and a configuration-dependent volumetric heat source
$q_V(\mat{y}, \mat{u})$, the temperature field satisfies
\begin{align}
\label{eq:strong}
\nabla \cdot \bigl(k(\mat{y})\,\nabla T\bigr) + q_V(\mat{y}, \mat{u}) = 0,
\quad \mat{y} \in \Omega ,
\end{align}
with Robin convection on the top and bottom surfaces and adiabatic side walls,
\begin{equation}
\label{eq:bc}
\begin{aligned}
-k(\mat{y})\,\partial_n T &= h\,(T - T_{\mathrm{amb}})
   && \text{on } \Gamma_{\mathrm{top}},\, \Gamma_{\mathrm{bot}}, \\
\partial_n T &= 0 && \text{on } \Gamma_{\mathrm{side}} .
\end{aligned}
\end{equation}
Here $\partial_n$ is the outward-normal derivative, $T_{\mathrm{amb}}$ the ambient
temperature, and $h$ the convection coefficient. Because $k$ is discontinuous
between material regions, Eq.~\eqref{eq:strong} holds within each region and is
completed by continuity of the temperature and of the normal heat flux
$k\,\partial_n T$ across their interfaces. Discretizing
Eq.~\eqref{eq:strong}--\eqref{eq:bc} via classical methods like finite difference or finite element yields a sparse linear system
\begin{align}
\label{eq:fvm}
\mat{A}_h\,\mat{T} = \mat{b}_h .
\end{align}
Each thermal evaluation in Eq.~\eqref{eq:obj} is one solve of Eq.~\eqref{eq:fvm}. On a
high-resolution mesh, this per-evaluation cost is what a surrogate must replace.

\subsection{The DeepOHeat-v1 Framework}
\label{sec:v1}

DeepOHeat~\cite{liu2023deepoheat} casts thermal simulation as an
operator-learning problem, and DeepOHeat-v1~\cite{yu2025deepoheat} makes the
resulting surrogate fast enough for full-chip resolution and trustworthy enough
to drive an optimizer. We review the formulation, architecture, data-free training objective, and hybrid optimization loop on which DeepOHeat-v2 builds.

\subsubsection{Operator Learning for Thermal Simulation}
For every configuration $\mat{u} \in \mathcal{U}$, the boundary-value problem
\eqref{eq:strong}--\eqref{eq:bc} has a unique temperature field
$T_{\mat{u}} \in \mathcal{S}$. The map that assigns this field to each
configuration is the \emph{solution operator}
\begin{equation}
\label{eq:operator}
G : \mathcal{U} \to \mathcal{S}, \qquad G(\mat{u}) = T_{\mat{u}} .
\end{equation}
Operator learning~\cite{lu2021deeponet,li2021fno} approximates $G$ by a neural surrogate $G_{\boldsymbol{\theta}}$ with parameters $\boldsymbol{\theta}$, trained once so a new configuration's temperature field follows from a single forward pass rather than a fresh solve of Eq.~\eqref{eq:fvm}.

\subsubsection{Separable Operator Network with ChebyKAN Trunks}
The original DeepOHeat~\cite{liu2023deepoheat} represents the surrogate $G_{\boldsymbol{\theta}}$ as
a DeepONet~\cite{lu2021deeponet}. A \emph{branch} network encodes a configuration
$\mat{u}$, such as its discretized power map $\mat{q}(\mat{u})$, into $r$
coefficients $\beta_1(\mat{u}),\dots,\beta_r(\mat{u})$; a \emph{trunk} network maps a
spatial coordinate $\mat{y}$ to $r$ basis functions
$\tau_1(\mat{y}),\dots,\tau_r(\mat{y})$; and the predicted temperature field is
their inner product,
\begin{equation}
\label{eq:deeponet}
G_{\boldsymbol{\theta}}(\mat{u})(\mat{y}) = \sum_{j=1}^{r}\beta_j(\mat{u})\,\tau_j(\mat{y}) .
\end{equation}
DeepOHeat-v1~\cite{yu2025deepoheat} improves this architecture along two axes. For
\emph{scalability}, it makes the trunk \emph{separable}~\cite{yu2024separable},
replacing the single trunk over $\mat{y}=(y_1,y_2,y_3)$ with one trunk per
spatial axis:
\begin{equation}
\label{eq:separable}
G_{\boldsymbol{\theta}}(\mat{u})(y_1, y_2, y_3)
  = \sum_{j=1}^{r} \beta_j(\mat{u})\,
    \tau^1_j(y_1)\,\tau^2_j(y_2)\,\tau^3_j(y_3) .
\end{equation}
A full-grid prediction then requires only $N_1+N_2+N_3$ trunk evaluations and an
outer product, rather than $N_1 N_2 N_3$ pointwise evaluations; combined with
forward-mode differentiation~\cite{khan2015vector} of the physics loss, this reduction makes
physics-informed training feasible at full-chip resolution. For \emph{accuracy},
the trunks are Chebyshev Kolmogorov--Arnold networks
(ChebyKANs)~\cite{liu2024kan, ss2024chebyshev}, whose learnable univariate edge functions mitigate
the spectral bias of plain multilayer-perceptron (MLP) trunks and resolve the
steep temperature gradients near hotspots; the branch remains an MLP.

\subsubsection{Data-Free Physics-Informed Training}
\label{sec:v1-loss}
The surrogate is trained directly from the governing equations, without using
simulation data. Define the interior residual of a
candidate field $T$ for configuration $\mat{u}$,
\begin{equation}
\label{eq:residual}
\mathcal{R}[T](\mat{y};\mat{u}) :=
  \nabla\!\cdot\!\bigl(k(\mat{y})\,\nabla T(\mat{y})\bigr) + q_V(\mat{y},\mat{u}),
\end{equation}
and let $\mathcal{B}[\,\cdot\,]$ denote the boundary operator collecting the
conditions in Eq.~\eqref{eq:bc}. Given training configurations
$\{\mat{u}^{(i)}\}_{i=1}^{N_u}\!\subset\!\mathcal{U}$, interior collocation points
$\{\mat{y}_r^{(j)}\}_{j=1}^{N_r}\!\subset\!\Omega$, and boundary points
$\{\mat{y}_b^{(j)}\}_{j=1}^{N_b}\!\subset\!\partial\Omega$, DeepOHeat-v1
minimizes the continuous physics-informed loss
\begin{equation}
\label{eq:v1loss}
\mathcal{L}(\boldsymbol{\theta}) = \mathcal{L}_{\Omega}(\boldsymbol{\theta})
  + \lambda_b\,\mathcal{L}_{\Gamma}(\boldsymbol{\theta}),
\end{equation}
whose interior and boundary terms penalize the residuals of the predicted field
$G_{\boldsymbol{\theta}}(\mat{u}^{(i)})$ at the collocation points,
\begin{align}
\label{eq:v1loss-terms}
\mathcal{L}_{\Omega}(\boldsymbol{\theta})
  &= \frac{1}{N_u N_r}\sum_{i=1}^{N_u}\sum_{j=1}^{N_r}
     \mathcal{R}\bigl[G_{\boldsymbol{\theta}}(\mat{u}^{(i)})\bigr]\!\bigl(\mat{y}_r^{(j)};\mat{u}^{(i)}\bigr)^2 ,
   \nonumber \\
\mathcal{L}_{\Gamma}(\boldsymbol{\theta})
  &= \frac{1}{N_u N_b}\sum_{i=1}^{N_u}\sum_{j=1}^{N_b}
     \mathcal{B}\bigl[G_{\boldsymbol{\theta}}(\mat{u}^{(i)})\bigr]\!\bigl(\mat{y}_b^{(j)}\bigr)^2 ,
\end{align}
where $\lambda_b>0$ balances the two terms. All derivatives act on the network
output through automatic differentiation, so training never requires labeled
solver data, which is why DeepOHeat-v1 is data-free.

\subsubsection{Hybrid Optimization for Trustworthiness}
A surrogate is fast but not guaranteed accurate on every configuration an
optimizer visits, so DeepOHeat-v1 pairs it with a verify-and-refine loop. Sampling
the predicted field $G_{\boldsymbol{\theta}}(\mat{u})$ on the mesh gives a vector $\widehat{\mat{T}}$, whose
relative residual against the discretized system,
\begin{equation}
\label{eq:relres}
r_{\mathrm{rel}} = \frac{\|\mat{A}_h\widehat{\mat{T}} - \mat{b}_h\|_2}{\|\mat{b}_h\|_2} ,
\end{equation}
costs a single sparse matrix--vector product. The loop compares $r_{\mathrm{rel}}$ against a threshold. If the residual is small, the
prediction is accepted; otherwise a refinement solves Eq.~\eqref{eq:fvm} by
GMRES~\cite{saad1986gmres} warm-started from $\widehat{\mat{T}}$; this recovers a
near-solver field in far fewer iterations than a cold solve. 

\subsection{Challenges of High-Contrast Multi-Die Stacks}
\label{sec:benchmark}
DeepOHeat-v1 was demonstrated on mild, low-contrast geometries. Real multi-die
3D-ICs are instead dominated by high material contrast: stacked dies, copper TSVs, and an
organic substrate place conductivities orders of magnitude apart within a single
chip, an $\sim\!800\times$ range on our benchmark stack (\Cref{sec:benchmark-stack}).

At this contrast, three new challenges arise that DeepOHeat-v1 cannot handle. First, the continuous physics loss \eqref{eq:v1loss} of DeepOHeat-v1 cannot represent the field across the
discontinuous layer and TSV interfaces, where $k$ changes abruptly and the
temperature is only $C^0$ (continuous but not differentiable). Second, the squared residual inherits the \emph{square} of the operator
conditioning ($\kappa^2\!\approx\!3.6\!\times\!10^9$ on our benchmark), which puts it
beyond first-order optimization. Finally, the hybrid loop discards each solver-refined field after one use, so the
surrogate never improves and the same solver work is repeated every time. 

In this paper, DeepOHeat-v2's training method (\Cref{sec:pretrain}) fixes the first two; its
self-improving framework (\Cref{sec:pipeline}) addresses the third challenge.

\section{Data-Free Physics-Informed Training on Heterogeneous Stacks}
\label{sec:pretrain}

Training the surrogate from physics alone on a heterogeneous stack faces two
obstacles: the loss must represent the temperature field across discontinuous
material interfaces, and it must stay optimizable despite the operator's
ill-conditioning. We address them in three steps. Discretizing by the finite-volume
method (FVM) handles the interfaces by construction (\Cref{sec:fvm-loss}). Writing
the resulting loss in energy form reduces the prediction-space loss-Hessian
conditioning from $\kappa^2$ to $\kappa := \kappa_2(\mat{A}_h)$
(\Cref{thm:energy-kappa}). The remaining factor $\kappa$ still stalls a per-coordinate optimizer, because at high contrast the stiff and compliant curvature directions couple across parameters; we remove it with a matrix-valued preconditioner that rescales them jointly (\Cref{sec:muon2}).

\subsection{An FVM-Discretized Physics Loss}
\label{sec:fvm-loss}

\paragraph*{Why not the continuous strong form}
DeepOHeat-v1 represents the field by a smooth network $T_{\boldsymbol{\theta}} : \Omega \to \mathbb{R}$ and
enforces the strong form \eqref{eq:strong} at collocation points. A discontinuous $k$
breaks this at the level of the formulation, not the optimization. Write
$\Gamma$ for a material interface, $T^{\pm}$ and $\partial_n T^{\pm}$ for the field and
its normal derivative on either side, and $[\![f]\!]_\Gamma := f^+ - f^-$ for the abrupt change in $f$ across $\Gamma$.

\begin{proposition}[The strong-form residual is singular at interfaces]
\label{prop:singular}
Let $k$ be piecewise constant with an abrupt change across $\Gamma$, and let
$T_{\boldsymbol{\theta}} \in C^1(\Omega)$ (continuous, with continuous first derivatives). Then,
in the distributional sense,
\begin{equation}
\label{eq:singular}
\nabla\!\cdot\!\bigl(k\nabla T_{\boldsymbol{\theta}}\bigr)
  = k\,\Delta T_{\boldsymbol{\theta}}
  + (k^+\!-\!k^-)\,\partial_n T_{\boldsymbol{\theta}}\;\delta_\Gamma ,
\end{equation}
where $\Delta$ is the piecewise Laplacian and $\delta_\Gamma$ is the surface
Dirac measure on $\Gamma$. The interface term has no finite value unless
$\partial_n T_{\boldsymbol{\theta}} = 0$, so driving the strong-form residual to zero forces a
smooth network to carry \emph{no} normal flux across every interface, incompatible
with conducting heat.
\end{proposition}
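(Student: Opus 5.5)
The plan is to compute the distributional divergence directly by testing against a smooth compactly supported function. Fix $\varphi \in C_c^\infty(\Omega)$. Since $k\nabla T_{\boldsymbol{\theta}}$ is piecewise $C^0$ with a jump only across $\Gamma$, the definition of the distributional divergence gives
\[
\bigl\langle \nabla\!\cdot(k\nabla T_{\boldsymbol{\theta}}),\varphi\bigr\rangle = -\int_\Omega k\,\nabla T_{\boldsymbol{\theta}}\cdot\nabla\varphi\,d\mat{y}.
\]
I will split $\Omega$ into the two subdomains $\Omega^\pm$ separated by $\Gamma$, with $k\equiv k^\pm$ on each. The normal $\mat{n}$ on $\Gamma$ points from $\Omega^-$ into $\Omega^+$, matching the convention $[\![f]\!]=f^+-f^-$.

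On each $\Omega^\pm$ I will integrate by parts using the divergence theorem. Where $T_{\boldsymbol{\theta}}$ is only $C^1$, I will first mollify, or assume the network is $C^2$ away from $\Gamma$, which holds for smooth activations and ChebyKAN trunks, so that the piecewise Laplacian is a function. Because $\varphi$ vanishes on $\partial\Omega$, this yields $\int_{\Omega^\pm} k^\pm\Delta T_{\boldsymbol{\theta}}\,\varphi$ plus boundary terms on $\Gamma$ only. Those terms are $-k^-\int_\Gamma \partial_n T_{\boldsymbol{\theta}}\,\varphi$ from $\Omega^-$, whose outward normal is $\mat{n}$, and $+k^+\int_\Gamma \partial_n T_{\boldsymbol{\theta}}\,\varphi$ from $\Omega^+$, whose outward normal is $-\mat{n}$.

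The key point is that $T_{\boldsymbol{\theta}}\in C^1(\Omega)$, so $\partial_n T_{\boldsymbol{\theta}}$ takes the same trace from both sides. The two surface integrals therefore combine into $(k^+-k^-)\int_\Gamma \partial_n T_{\boldsymbol{\theta}}\,\varphi\,dS = \langle (k^+-k^-)\partial_n T_{\boldsymbol{\theta}}\,\delta_\Gamma,\varphi\rangle$. This gives Eq.~\eqref{eq:singular}.

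For the second claim, I will argue that the residual $\mathcal{R}[T_{\boldsymbol{\theta}}]$ equals an $L^1_{\mathrm{loc}}$ function, namely $k\Delta T_{\boldsymbol{\theta}}+q_V$, plus the singular measure $(k^+-k^-)\partial_n T_{\boldsymbol{\theta}}\,\delta_\Gamma$. A surface measure is mutually singular with Lebesgue measure, so the residual is a function, and hence has finite pointwise values and a finite $L^2$ norm, only if this singular part vanishes. Since $k^+\neq k^-$, that requires $\partial_n T_{\boldsymbol{\theta}}=0$ on $\Gamma$.

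Physically, a zero normal derivative means the flux $k\partial_n T$ across $\Gamma$ is zero. Heat generated in the die layers must cross the layer interfaces to reach the convective surfaces $\Gamma_{\mathrm{top}},\Gamma_{\mathrm{bot}}$. To show this rigorously, I will integrate the PDE over the region below an interface: the total source there must equal the flux through its boundary, and that flux is zero by construction. This is a contradiction whenever $q_V\not\equiv0$ there.

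The main obstacle is making precise the phrase ``driving the strong-form residual to zero.'' Collocation only samples the residual pointwise off $\Gamma$, where it is finite. I will therefore interpret the statement as follows: any $L^2$ or measure-norm residual that correctly represents the operator is infinite unless the singular part vanishes. Point collocation simply cannot see the $\delta_\Gamma$ term and thus converges to the wrong physics. I will state this interpretation explicitly rather than attempt a stronger claim.
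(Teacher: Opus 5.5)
The paper states this proposition without proof, so there is nothing to compare against. Your distributional computation is the standard argument and it is correct:
\begin{itemize}
\item with $\mat{n}$ pointing from $\Omega^-$ to $\Omega^+$, the sign bookkeeping reproduces the factor $(k^+-k^-)$;
\item $C^1$ regularity is exactly what makes the two one-sided traces of $\partial_n T_{\boldsymbol{\theta}}$ coincide;
\item the Lebesgue-decomposition argument correctly shows that the residual is an $L^2$ function only if $\partial_n T_{\boldsymbol{\theta}}=0$ on $\Gamma$.
\end{itemize}
One small regularity fix: the divergence theorem on $\Omega^\pm$ needs $T_{\boldsymbol{\theta}}$ to be $C^2$ up to $\Gamma$ from each side, not just ``away from $\Gamma$''. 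It is simpler to note that the network (smooth activations, Chebyshev-polynomial trunks) is $C^\infty$ on all of $\Omega$. You can then drop the mollification, which by itself does not give $k\,\Delta T_{\boldsymbol{\theta}}$ a meaning for a merely $C^1$ field.

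The step that fails is your energy-balance contradiction. The region $\Omega^-$ below a layer interface has a boundary that includes the convective bottom face, not only $\Gamma$ and the adiabatic side walls. With $\partial_n T_{\boldsymbol{\theta}}=0$ on $\Gamma$ you therefore get
\[
\int_{\Omega^-} q_V\,d\mat{y} \;=\; -\int_{\partial\Omega^-} k\,\partial_\nu T_{\boldsymbol{\theta}}\,dS \;=\; \int_{\Gamma_{\mathrm{bot}}} h_{\mathrm{bot}}\,\bigl(T_{\boldsymbol{\theta}}-T_{\mathrm{amb}}\bigr)\,dS .
\]
This is not zero and contradicts nothing: each side can dissipate its own power through its own Robin face. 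For a single interface with a convective face on each side, zero interface flux is fully consistent with energy balance.

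A contradiction needs a source-carrying region whose \emph{entire} boundary consists of material interfaces and adiabatic walls. Such regions exist on the benchmark. The Die-1 silicon (layers 2--3) sits between the substrate interface ($0.5$ vs.\ $140/400$) and the bond interface ($140/400$ vs.\ $100$). Zero flux through both forces $\int q_V=0$ there, against the $\approx 1.8$~W Die~1 dissipates. Die~2 is likewise enclosed between the bond layer and the TIM.

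For the general statement, where no enclosed region need exist, argue against the physical solution instead. Zero residual plus the boundary conditions forces $T_{\boldsymbol{\theta}}$ to solve, in each material region, the problem with homogeneous Neumann data on $\Gamma$. That field differs from the true one whenever the physical interface flux is nonzero somewhere on $\Gamma$. Moreover, its one-sided pieces generically do not match in value across $\Gamma$, so typically no $C^1$ network attains zero residual at all.

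Your closing caveat, that point collocation never sees the $\delta_\Gamma$ term, is correct and worth keeping.
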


\begin{proposition}[The physical field has an interface kink]
\label{prop:kink}
The physical temperature is continuous, $[\![T]\!]_\Gamma = 0$, but continuity of
the normal heat flux $k\,\partial_n T$ forces an abrupt slope change,
\begin{equation}
\label{eq:kink}
\partial_n T^- = \frac{k^+}{k^-}\,\partial_n T^+ \ne \partial_n T^+
\qquad (k^+ \ne k^-) .
\end{equation}
Hence $T$ is $C^0$ (continuous) but not $C^1$ at $\Gamma$, and lies outside the
hypothesis class of any $C^1$ network.
\end{proposition}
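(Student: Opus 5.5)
The plan is to read off both claims of \Cref{prop:kink} from the transmission conditions that complete Eq.~\eqref{eq:strong} across $\Gamma$; these conditions I would first justify from the weak form. Take the physical field $T$ to be the weak solution in $H^1(\Omega)$, meaning
\begin{equation*}
\int_\Omega k\,\nabla T\cdot\nabla v\,d\mat{y}
 + \int_{\Gamma_{\mathrm{top}}\cup\Gamma_{\mathrm{bot}}} h\,(T - T_{\mathrm{amb}})\,v\,dS
 = \int_\Omega q_V\,v\,d\mat{y}
\end{equation*}
holds for all test functions $v$.

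\textbf{Continuity of $T$.} Since $T\in H^1(\Omega)$, its two one-sided traces on $\Gamma$ must agree, which gives $[\![T]\!]_\Gamma=0$. Put differently, a jump in $T$ would put a surface delta in $\nabla T$ and push $T$ out of $H^1$.

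\textbf{Flux continuity.} I would take a test function $v$ supported in a small neighbourhood of a point of $\Gamma$. On each side $k$ is constant and $T$ is smooth, so we may integrate by parts separately in each region. The interior terms then cancel against the strong equation holding in each region. What remains is the surface integral
\begin{equation*}
\int_\Gamma \bigl(k^+\partial_n T^+ - k^-\partial_n T^-\bigr)\,v\,dS = 0 .
\end{equation*}
Because $v$ on $\Gamma$ is arbitrary, this gives $k^+\partial_n T^+ = k^-\partial_n T^-$ pointwise. Dividing by $k^->0$ yields Eq.~\eqref{eq:kink}.

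\textbf{The inequality $\partial_n T^-\ne\partial_n T^+$.} Whenever $k^+\ne k^-$, Eq.~\eqref{eq:kink} gives $\partial_n T^-=\partial_n T^+$ exactly when $\partial_n T^+=0$. So the strict inequality holds on every part of $\Gamma$ that carries nonzero normal flux. This step is the main obstacle: the flux must be nonzero on a set of positive surface measure of $\Gamma$. I would argue by contradiction, by showing that the flux cannot vanish identically on a relatively open piece of $\Gamma$.
\begin{itemize}
\item Suppose $k\,\partial_n T=0$ on such a piece. Then on each side $T$ has zero Cauchy data there: it is constant along the piece and has zero normal derivative.
\item In the harmonic case, where $q_V$ vanishes near the interface, unique continuation forces $T$ to be constant on each adjoining region.
\item That constant field contradicts the heat actually being conducted from the sources to the Robin boundaries, with $q_V\ge 0$ not identically zero.
\end{itemize}
I expect this part to need a generic-design caveat. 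The statement should therefore be read as holding wherever heat crosses $\Gamma$, that is, $\partial_n T^+\neq 0$.

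\textbf{Conclusion.} The one-sided normal derivatives differ, so $\nabla T$ jumps across $\Gamma$ while $T$ itself stays continuous. Hence $T\in C^0$ but $T\notin C^1$ near $\Gamma$. Every $C^1$ network has continuous normal derivative across $\Gamma$, so no such network can reproduce $T$ there. This mirrors \Cref{prop:singular}.
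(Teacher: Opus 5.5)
Your main line is the paper's. The paper posits continuity of $T$ and of $k\,\partial_n T$ as the conditions completing Eq.~\eqref{eq:strong} across interfaces (Section~\ref{sec:obj}), and rearranges flux continuity into Eq.~\eqref{eq:kink}. Your weak-form derivation of those two conditions uses the same bilinear form as Appendix~\ref{app:continuous}. It is a more foundational justification of the same step. It holds pointwise on the flat faces of $\Gamma$, away from TSV edges and corners, where the one-sided gradients need not be bounded.

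The step that fails is the unique-continuation bullet. Zero normal flux on a patch $P\subset\Gamma$ is Neumann data only. Nothing makes $T$ constant along $P$, so you do not have zero Cauchy data.

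The claim you are aiming for is in fact false. Take $\Gamma=\{y_1=0\}$, $q_V=0$ nearby, and $T=a\,y_3+b$ on both sides. Then $T$ is harmonic in each region and continuous. Flux continuity holds trivially because $\partial_n T^\pm=\partial_{y_1}T=0$. Yet $T$ is $C^\infty$ across the jump in $k$.

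Applied correctly, unique continuation tells you only that $T$ is locally mirror-symmetric about $P$, which contradicts nothing. The argument is: evenly reflect each one-sided field across $P$, then use uniqueness for the Cauchy problem.

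So $\partial_n T^+\neq 0$ cannot be derived from the boundary-value problem; it must enter as a hypothesis. That is exactly what the statement tacitly assumes: heat actually crossing $\Gamma$, as in the ``conducting heat'' reading of Proposition~\ref{prop:singular}.

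Drop the bullet and make your caveat an explicit assumption. Also, the non-$C^1$ conclusion needs only one point of $\Gamma$ with $\partial_n T^+\neq 0$, not a set of positive surface measure.
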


Together the propositions show that the problem lies in the model class, not the
optimization: a smooth network cannot both
carry heat across an interface and drive the strong-form residual to zero because
the field it must fit is itself non-smooth. Domain-decomposition
physics-informed neural networks (PINNs)~\cite{jagtap2020cpinn} and discontinuity-capturing architectures~\cite{hu2022dcsnn}
recover such fields, but need the interface geometry known in advance; a multi-die stack carries thousands of TSV interfaces whose
layout changes with every pattern, so neither scales.

\paragraph*{The discretized loss}
We instead let the discretization carry the interface physics. Integrating
Eq.~\eqref{eq:strong} over each control volume with harmonic-mean face
conductivities~\cite{patankar1980numerical} gives the symmetric positive-definite
(SPD) system $\mat{A}_h\mat{T}=\mat{b}_h$ of \eqref{eq:fvm}; the harmonic mean enforces flux
continuity at every face, so the discontinuities of
Propositions~\ref{prop:singular}--\ref{prop:kink} are handled by construction. We
train on the data-free squared residual of this system,
\begin{equation}
\label{eq:strongloss}
\mathcal{L}_{\mathrm{strong}}(\boldsymbol\theta) = \|\mat{A}_h\widehat{\mat{T}}-\mat{b}_h\|_2^2 ,
\end{equation}
assembled from the coefficients of $\mat{A}_h$ and $\mat{b}_h$ with no solved field. Its
gradient is a fixed stencil applied through $\mat{A}_h$, avoiding the second-order
automatic differentiation a continuous loss would need at every collocation point on
the $N\!\approx\!2\!\times\!10^6$-cell grid. Discretized residuals also appear in~\cite{gao2021phygeonet,chiu2022canpinn},
there for smooth geometries.

\paragraph*{Conditioning of the discretized loss}
That settles the representation problem, but $\mat{A}_h$ is ill-conditioned: $\kappa_2(\mat{A}_h)$ grows
with the conductivity ratio $k_{\max}/k_{\min}$ independently of the scheme
(Appendix~\ref{app:continuous}), reaching $\kappa_2(\mat{A}_h)\approx6.02\!\times\!10^4$ on our
benchmark (\Cref{sec:benchmark-stack}). The squared residual \eqref{eq:strongloss} then
\emph{squares} it: its Hessian $2\mat{A}_h^2$ has condition number
$\kappa_2(\mat{A}_h)^2\approx3.6\!\times\!10^9$, putting the loss beyond first-order
optimization (\Cref{tab:pretrain}).

\subsection{A Reformulation with the Energy Form}
\label{sec:energy-form}

To remove the squaring, we replace the squared residual \eqref{eq:strongloss}
with the energy form of the same linear system.

\begin{definition}[Energy-form discrete loss]
\label{def:energy}
For a predicted vector $\widehat{\mat{T}} \in \mathbb{R}^N$, the FVM energy-form loss is
\begin{equation}
\label{eq:energyloss}
\mathcal{L}_{\mathrm{energy}}(\boldsymbol\theta) := \tfrac{1}{2}\,\widehat{\mat{T}}^\top\mat{A}_h\,\widehat{\mat{T}} - \mat{b}_h^\top\,\widehat{\mat{T}} .
\end{equation}
\end{definition}

\begin{theorem}[Energy form reduces the conditioning from $\kappa^2$ to $\kappa$]
\label{thm:energy-kappa}
The strong-form loss \eqref{eq:strongloss} and the energy-form loss
\eqref{eq:energyloss} share the unique minimizer $\mat{T}^* = \mat{A}_h^{-1}\mat{b}_h$, but as
functions of the prediction $\widehat{\mat{T}}$ their Hessians have different
conditioning,
\begin{align}
\kappa(\nabla_{\widehat{\mat{T}}}^2\mathcal{L}_{\mathrm{strong}}) = \kappa_2(\mat{A}_h)^2, \qquad
\kappa(\nabla_{\widehat{\mat{T}}}^2\mathcal{L}_{\mathrm{energy}}) = \kappa_2(\mat{A}_h).
\end{align}
\end{theorem}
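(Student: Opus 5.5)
The argument is a direct computation in the prediction variable $\widehat{\mat{T}}\in\mathbb{R}^N$, using only that $\mat{A}_h$ is symmetric positive definite. That property was established for the harmonic-mean FVM assembly in \Cref{sec:fvm-loss}; for the finite-volume discretization with Robin top and bottom faces, positive definiteness rather than mere semidefiniteness comes from the convective terms $h>0$. We treat both losses as quadratic functions of $\widehat{\mat{T}}$, compute their gradients and Hessians in closed form, and read off the minimizers and condition numbers from the spectral decomposition $\mat{A}_h=\mat{Q}\boldsymbol{\Lambda}\mat{Q}^\top$, where $0<\lambda_{\min}\le\lambda_i\le\lambda_{\max}$.

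\textbf{Shared minimizer.} Expanding the strong-form loss gives $\mathcal{L}_{\mathrm{strong}}=\widehat{\mat{T}}^\top\mat{A}_h^\top\mat{A}_h\widehat{\mat{T}}-2\mat{b}_h^\top\mat{A}_h\widehat{\mat{T}}+\|\mat{b}_h\|^2$. Its gradient is $2\mat{A}_h^\top(\mat{A}_h\widehat{\mat{T}}-\mat{b}_h)$ and its Hessian is $2\mat{A}_h^\top\mat{A}_h=2\mat{A}_h^2$. For the energy loss, symmetry of $\mat{A}_h$ gives gradient $\mat{A}_h\widehat{\mat{T}}-\mat{b}_h$ and Hessian $\mat{A}_h$. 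Both Hessians are positive definite, since $\mat{A}_h$ is invertible. Each loss is therefore strictly convex, and its unique minimizer is its unique stationary point. Invertibility of $\mat{A}_h$ makes both stationarity equations equivalent to $\mat{A}_h\widehat{\mat{T}}=\mat{b}_h$, so both losses are minimized at $\mat{T}^*=\mat{A}_h^{-1}\mat{b}_h$.

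\textbf{Conditioning.} For an SPD matrix, $\kappa_2$ equals $\lambda_{\max}/\lambda_{\min}$, and this ratio is invariant under positive scaling. The Hessian $\mat{A}_h$ of the energy loss therefore has condition number $\kappa_2(\mat{A}_h)$. The matrix $2\mat{A}_h^2=\mat{Q}(2\boldsymbol{\Lambda}^2)\mat{Q}^\top$ is SPD with eigenvalues $2\lambda_i^2$. Because $t\mapsto t^2$ is increasing on $(0,\infty)$, its extreme eigenvalues are $2\lambda_{\max}^2$ and $2\lambda_{\min}^2$. This yields $\kappa=(\lambda_{\max}/\lambda_{\min})^2=\kappa_2(\mat{A}_h)^2$.

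The only step needing care is the reliance on symmetry. Without symmetry, the energy-form gradient would be $\tfrac12(\mat{A}_h+\mat{A}_h^\top)\widehat{\mat{T}}-\mat{b}_h$, and the shared-minimizer claim would fail. The proof should therefore state explicitly that the two-point harmonic-mean flux coefficients are symmetric between neighbouring cells, and that the Robin terms only add positive entries to the diagonal. Everything else in the argument is elementary linear algebra.
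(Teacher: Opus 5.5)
Your proposal is correct and follows essentially the paper's argument. Both derive the two quadratic Hessians $2\mat{A}_h^2$ and $\mat{A}_h$ from the SPD property of $\mat{A}_h$ and read off $\kappa_2(\mat{A}_h)^2$ and $\kappa_2(\mat{A}_h)$; the only cosmetic difference is that the paper recenters at $\mat{T}^*$ via $\mat{e}=\widehat{\mat{T}}-\mat{T}^*$, so the shared minimizer follows immediately from both losses being positive-definite quadratic forms in $\mat{e}$ up to a constant, whereas you solve the stationarity equations and spell out the spectral argument for $\kappa_2(\mat{A}_h^2)=\kappa_2(\mat{A}_h)^2$ that the paper leaves implicit.
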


\begin{proof}
Both losses are quadratic in the prediction; write $\mat{e} := \widehat{\mat{T}}-\mat{T}^*$
with $\mat{T}^* = \mat{A}_h^{-1}\mat{b}_h$. Then
\begin{align}
\mathcal{L}_{\mathrm{strong}}(\widehat{\mat{T}}) &= \|\mat{A}_h\mat{e}\|_2^2 = \mat{e}^\top \mat{A}_h^2\,\mat{e},
  & \nabla_{\widehat{\mat{T}}}^2\mathcal{L}_{\mathrm{strong}} &= 2\mat{A}_h^2 , \\
\mathcal{L}_{\mathrm{energy}}(\widehat{\mat{T}}) &= \tfrac{1}{2}\,\mat{e}^\top \mat{A}_h\,\mat{e} + \text{const},
  & \nabla_{\widehat{\mat{T}}}^2\mathcal{L}_{\mathrm{energy}} &= \mat{A}_h .
\end{align}
Because $\mat{A}_h$ is SPD, both Hessians are SPD, so $\mat{T}^*$ is the unique minimizer
of each loss. Their condition numbers are $\kappa_2(\mat{A}_h^2)=\kappa_2(\mat{A}_h)^2$ and
$\kappa_2(\mat{A}_h)$, respectively.
\end{proof}

$\mathcal{L}_{\mathrm{energy}}$ is the discrete energy of the boundary-value problem (Dirichlet
energy plus Robin boundary term); minimizing this strictly convex quadratic solves
$\mat{A}_h\mat{T}=\mat{b}_h$ with curvature $\mat{A}_h$ rather than $\mat{A}_h^2$, making first-order
operator learning feasible at high contrast. Variational energies are an established training objective in physics-informed learning~\cite{e2018deepritz,kharazmi2019vpinn}; we bring this form to the discrete finite-volume operator on a discontinuous stack and quantify its conditioning gain.

\Cref{thm:energy-kappa} concerns the Hessian in the prediction $\widehat{\mat{T}}$, but
training optimizes the network parameters $\boldsymbol\theta$, and the energy loss
$\mathcal{L}(\boldsymbol\theta)=\mathcal{L}_{\mathrm{energy}}(\widehat{\mat{T}}(\boldsymbol\theta))$ is nonconvex in $\boldsymbol\theta$. With
$\mat{J} := \partial\widehat{\mat{T}}/\partial\boldsymbol\theta$ the output Jacobian and residual
$\mat{r} := \mat{A}_h\widehat{\mat{T}}-\mat{b}_h = \nabla_{\widehat{\mat{T}}}\mathcal{L}_{\mathrm{energy}}$, the parameter-space
Hessian splits into a Gauss--Newton term and a residual-weighted term,
\begin{equation}
\label{eq:param-hessian}
\nabla_{\boldsymbol\theta}^2\mathcal{L}
  = \mat{J}^\top \mat{A}_h\, \mat{J}
  \;+\; \sum_i \mat{r}_i\,\nabla_{\boldsymbol\theta}^2\widehat{T}_i .
\end{equation}
Near a good fit $\mat{r}\!\to\!0$ and the Gauss--Newton term $\mat{J}^\top\mat{A}_h \mat{J}$ governs the
curvature. Where $\mat{J}$ has full column rank,
$\kappa(\mat{J}^\top\mat{A}_h \mat{J})\le\kappa(\mat{J})^2\,\kappa_2(\mat{A}_h)$ (Appendix~\ref{app:continuous});
the strong-form Gauss--Newton Hessian $2\mat{A}_h^2$ gives the same bound with
$\kappa_2(\mat{A}_h)^2$ in place of $\kappa_2(\mat{A}_h)$. The Jacobian factor $\kappa(\mat{J})^2$ is
common to both, so \Cref{thm:energy-kappa}'s $\kappa^2\!\to\!\kappa$ reduction passes
through to the parameter-space conditioning the optimizer actually sees. The energy
rewrite adds no cost: the strong- and energy-form losses share the same matrix-free
stencil. 

\paragraph*{Matrix-free evaluation}
Evaluating either loss never requires $\mat{A}_h$ as a matrix, only its action
$\mat{A}_h\widehat{\mat{T}}$ on the predicted field, and that action is a local stencil on
the grid. Each control volume exchanges heat only with its six face-neighbors, so
the entry of $\mat{A}_h\widehat{\mat{T}}$ at a cell is the sum of the face fluxes
$G_f(\widehat{T}_j-\widehat{T}_i)$ to those neighbors plus its Robin boundary term,
the face conductances $G_f$ being precomputed once from the geometry. This action
is a handful of shifted-array differences and additions over the
$N\!\approx\!2\!\times\!10^6$-cell grid, evaluated at $O(N)$ cost and constant
memory; the $N\!\times\!N$ matrix is never built. The parameter gradient then comes
from a single reverse-mode pass through the same stencil, so automatic
differentiation sees only these array operations, not an assembled operator. We form
$\mat{A}_h$ explicitly only for the reference solver. Since the loss reads only the
predicted field, it applies to any architecture, not just the separable one used
here.

\subsection{Optimization with a Matrix-Valued Preconditioner}
\label{sec:muon2}

Training minimizes $\mathcal{L}(\boldsymbol\theta)$ over the network parameters by first-order
updates. The parameters that carry the remaining conditioning are matrix-shaped: each layer of
the branch network is a weight matrix $\mat{W}$, and at step $t$ the optimizer forms the
loss gradient $\mat{G}_t := \partial\mathcal{L}/\partial \mat{W}$ and its momentum $\mat{M}_t$, then steps
along a preconditioned version of $\mat{M}_t$. Adam preconditions \emph{entrywise},
rescaling each entry of $\mat{M}_t$ by its own running magnitude; but by
Eq.~\eqref{eq:param-hessian} the conditioning left after the energy rewrite (still
$\kappa_2(\mat{A}_h)\sim 10^4$ on our benchmark) is off-diagonal in $\mat{W}$: the stiff and
compliant directions mix many entries, and no entrywise rescaling can remove that.
A matrix-valued preconditioner must act jointly across the rows and columns of
$\mat{W}$, as orthogonalizing $\mat{M}_t$ does.

Muon~\cite{jordan2024muon} is such a preconditioner. Its key operation replaces a
matrix $\mat{M}$ with the matrix that has the same singular directions but all singular
values equal to one,
\begin{equation}
\label{eq:ns5}
\mathrm{orth}(\mat{M}) := \mat{M}\,(\mat{M}^\top \mat{M})^{-1/2},
\end{equation}
the orthogonal factor of the polar decomposition of $\mat{M}$. Applied to a gradient
momentum, $\mathrm{orth}$ makes the update act with equal strength along every
direction of $\mat{W}$ instead of being dominated by the few directions in which the
momentum is largest; this is the joint rescaling that a diagonal optimizer such
as Adam cannot perform. The inverse square root is never formed: a degree-$5$
Newton--Schulz (NS5) iteration approximates $\mathrm{orth}(\mat{M})$ with a few matrix
multiplications.

Muon already removes most of this conditioning, but its Newton--Schulz step has a
limitation that matters here. The polynomial that realizes
$\mathrm{orth}(\cdot)$ drives a singular value $\sigma$ toward $1$ only when $\sigma$ already lies
in its basin; for $\sigma$ near zero it is nearly flat and leaves the smallest singular
directions of the momentum almost unchanged. Under the conditioning of $\mat{A}_h$,
however, the informative low-curvature modes concentrate in just these directions,
so plain Muon under-corrects them.
Muon$^2$~\cite{liu2026muon} restores these directions by prepending an Adam-style
second-moment rescale $\mat{M}_t\oslash(\sqrt{\mat{V}_t}+\epsilon)$ that raises them to $O(1)$
magnitude before orthogonalization, so the polynomial can then drive them to unit
scale. Muon$^2$ keeps a momentum $\mat{M}_t$ and an elementwise second moment $\mat{V}_t$,
rescales, and orthogonalizes:
\begin{align}
\label{eq:muon2}
\mat{M}_t &= \mu\,\mat{M}_{t-1} + \mat{G}_t, \quad
\mat{V}_t = \beta\,\mat{V}_{t-1} + (1-\beta)\,\mat{G}_t \odot \mat{G}_t, \nonumber \\
\mat{W}_{t+1} &= \mat{W}_t - \eta\;\mathrm{orth}\!\bigl(\mat{M}_t \oslash (\sqrt{\mat{V}_t} + \epsilon)\,\bigr),
\end{align}
where $\odot$ and $\oslash$ are the elementwise (Hadamard) product and division,
the square root is elementwise, and $\mu,\beta,\eta,\epsilon$ are the momentum,
second-moment, step-size, and stabilization constants. As in Muon, the orthogonalized update is
further scaled by $0.2\,\sqrt{d_{\max}/d_{\min}}$, with $d_{\max}$ and $d_{\min}$
the larger and smaller dimensions of $\mat{W}$, so its root-mean-square size matches an
Adam step.

Because $\mathrm{orth}$ is defined for 2D matrices, we apply Muon$^2$ to the 2D MLP
branch weights and train the 3D ChebyKAN trunk tensors and all 1D parameters with
Adam; this is the configuration used throughout. The pretraining study
(\Cref{sec:pretrain-results}) bears out this hierarchy: Adam plateaus on the energy
loss, Muon removes most of the conditioning, and Muon$^2$ reaches the lowest error.

\section{Self-Improving Operator-Learning Framework}
\label{sec:pipeline}

\begin{figure}[!t]
  \centering
  \includegraphics[width=\columnwidth]{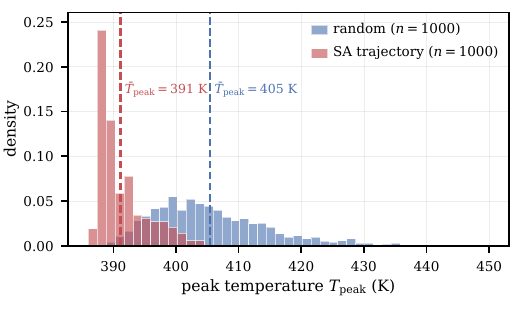}
  \caption{Distribution shift along the search. Peak temperatures of $1000$
  random placements (the pretraining distribution, mean $\approx 405$~K, broad)
  and of $1000$ placements visited along an optimization trajectory (mean
  $\approx 391$~K, concentrated in the low-temperature tail). The optimizer drives
  the design into a region the random pretraining distribution rarely samples, so
  the surrogate is evaluated far from where most of its training examples lie.}
  \label{fig:dist_shift}
\end{figure}

Before the search runs, the surrogate can only be pretrained on randomly generated
placements, since the placements the optimizer will favor are not yet known. The
search does not stay there: the optimizer drives the design toward low-peak
placements rare under the random pretraining distribution, shifting where the
surrogate is evaluated away from where it was trained.
\Cref{fig:dist_shift} shows the size of the shift: across a run, the peak temperatures of
the placements the search visits concentrate near $391$~K, well into the
low-temperature tail of the random pretraining distribution centered near
$405$~K. The surrogate is therefore queried where its pretraining examples are sparsest, and its accuracy on the broad pretraining set need not carry to this shifted region.

DeepOHeat-v2 closes this gap online. Along the search trajectory a trust gate
flags placements for the solver to refine, and the surrogate periodically retrains
on the refined solutions, so it improves where the search goes and steers the
search toward better placements (\Cref{fig:self_improve}). Refinement and retraining run only in a fixed number of early rounds; afterward
the search runs on the adapted surrogate alone. Each refinement thus corrects the
current step and, through retraining, improves later predictions.

\begin{figure*}[!t]
  \centering
  \includegraphics[width=0.85\textwidth]{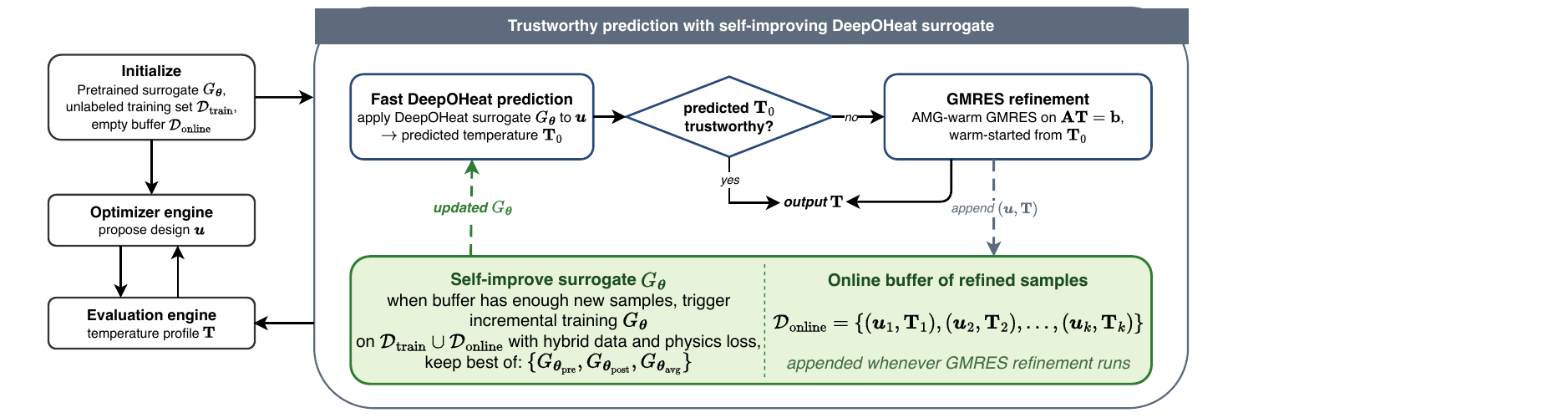}
  \caption{The DeepOHeat-v2 self-improving loop. The optimizer proposes a
  placement; the surrogate predicts its temperature field; the trust gate either
  accepts the prediction or sends it to a warm-started GMRES refinement. Refined
  solutions accumulate in an online buffer that periodically retrains the
  surrogate, so the surrogate grows more accurate along the trajectory the search
  follows.}
  \label{fig:self_improve}
\end{figure*}

\subsection{Online Adaptation: Incremental Training and Model Selection}
\label{sec:adapt}

\paragraph*{Incremental training}
We write $\mathcal{D}_{\mathrm{train}}$ for the fixed set of randomly generated
placements used in pretraining (\Cref{sec:pretrain}) and $\mathcal{D}_{\mathrm{online}}$
for the online buffer of solver-refined, in-trajectory placements accumulated
during the run (\Cref{sec:repair}); incremental training combines the two.
Each incremental-training event continues pretraining: the same architecture, the
same optimizer (Muon$^2$ on the weight matrices, Adam on the rest), and the same
optimizer state (momenta and second moments), carried across events from the
pretrained checkpoint. The loss keeps the energy term on both $\mathcal{D}_{\mathrm{train}}$ and
$\mathcal{D}_{\mathrm{online}}$ and adds a data mean-squared-error (MSE) term on the refined labels:
\begin{align}
\label{eq:inc-loss}
\mathcal{L}_{\mathrm{inc}} &= \lambda_e\bigl[\mathcal{L}_{\mathrm{energy}}(\mathcal{B}_{\rm orig})
                       + \mathcal{L}_{\mathrm{energy}}(\mathcal{B}_{\rm refined})\bigr] \nonumber \\
      &\quad + \lambda_d\,
        \mathrm{MSE}\bigl(G_{\boldsymbol{\theta}}(\mathcal{B}_{\rm refined}),
                          \mat{T}_{\rm refined}\bigr).
\end{align}
Two batch streams are sampled per step: $\mathcal{B}_{\rm orig}$
from $\mathcal{D}_{\mathrm{train}}$ and $\mathcal{B}_{\rm refined}$ from $\mathcal{D}_{\mathrm{online}}$. The energy stream
sees both datasets; the data-MSE stream sees $\mathcal{D}_{\mathrm{online}}$ only. We set
$\lambda_e$ and $\lambda_d$ so the two terms contribute comparable
magnitudes at the start of an event, and hold the incremental learning rates
constant and well below their pretraining values. Each event runs for a fixed
number of gradient steps. Keeping the energy term on $\mathcal{D}_{\mathrm{train}}$ active
throughout prevents the data-MSE stream from over-specializing the surrogate to the
narrow trajectory distribution.

\paragraph*{In-trajectory model selection}
An incremental event can overfit the buffer or step into a worse minimum, so we
keep its update only if it lowers error on held-out trajectory data. At the start of an event the buffer $\mathcal{D}_{\mathrm{online}}$ is split chronologically:
the most recent samples form a held-out validation slice $\mathcal{D}_{\mathrm{online}}^{\rm val}$ and the
older remainder is used for training.
After the event's gradient steps we form three candidates, the pre-update
model, the updated model, and their weight average~\cite{wortsman2022soup},
\begin{align}
G_{\boldsymbol{\theta}_{\mathrm{pre}}},\quad G_{\boldsymbol{\theta}_{\mathrm{post}}},\quad
G_{\boldsymbol{\theta}_{\mathrm{avg}}} = G_{(\boldsymbol{\theta}_{\rm pre} + \boldsymbol{\theta}_{\rm post})/2},
\end{align}
score each by the mean absolute peak temperature error on $\mathcal{D}_{\mathrm{online}}^{\rm val}$, and deploy the argmin.
Because $G_{\boldsymbol{\theta}_{\mathrm{pre}}}$ is a candidate, the deployed model is never worse than the
pre-event one on $\mathcal{D}_{\mathrm{online}}^{\rm val}$; when the selection keeps $G_{\boldsymbol{\theta}_{\mathrm{pre}}}$, the
optimizer state is also reverted to its pre-event snapshot so no rejected
gradient leaks forward. The averaged candidate $G_{\boldsymbol{\theta}_{\mathrm{avg}}}$ offers a midpoint that sometimes generalizes
better than either endpoint, at the cost of one more validation pass.

\paragraph*{A fixed number of adaptation events}
Refinement and incremental training run for a fixed number of events $C$. After the
$C$th event the surrogate has adapted to the trajectory distribution, so every later
proposal is decided on the surrogate alone, with no further solver calls or
retraining. The limit is needed because the trust gate (\Cref{sec:rhotk}) flags a fixed
fraction of proposals by design; without the limit, refinement would continue indefinitely
even after the surrogate is accurate.

\subsection{When to Call the Non-AI Solver: The Hotspot Trust Gate}
\label{sec:rhotk}

The surrogate is accurate on most placements; the gate must find the few with
large peak temperature error. Catching them takes a signal that \emph{ranks}
predictions by peak error, which the v1 global residual cannot do, and a threshold
that follows the residual scale as it drifts with the search and with each
retraining. The hotspot-localized residual $r_{\mathrm{hot}}$ supplies the ranking; a
sliding-window percentile sets the threshold.

\paragraph*{Why the global residual may fail to rank}
DeepOHeat-v1~\cite{yu2025deepoheat} gates on the global relative residual
$r_{\mathrm{rel}} = \|\mat{r}\|_2/\|\mat{b}_h\|_2$ (\Cref{sec:v1}), with residual
$\mat{r} := \mat{A}_h\widehat{\mat{T}} - \mat{b}_h$, accepting when $r_{\mathrm{rel}} < \alpha$. For
$r_{\mathrm{rel}}$ to rank predictions it would have to track the peak error, but the two
are linked only through the global operator norm: with
$\mat{e} := \widehat{\mat{T}} - \mat{T}$ and $x^*$ the predicted hotspot,
\begin{align}
\label{eq:rrel-bound}
|e(x^*)| \le \|\mat{e}\|_2 \le \|\mat{A}_h^{-1}\|_2\,\|\mat{r}\|_2
        = \frac{\kappa_2(\mat{A}_h)}{\|\mat{A}_h\|_2}\,\|\mat{b}_h\|_2\cdot r_{\mathrm{rel}} .
\end{align}
The norm $\|\mat{A}_h^{-1}\|_2$ carries the full conditioning
$\kappa_2(\mat{A}_h)=6.02\!\times\!10^4$, so the bound evaluates to
$\approx 1.6\!\times\!10^5$~K, far above the sub-Kelvin errors we observe, and
leaves $r_{\mathrm{rel}}$ free to vary independently of the peak error. In practice it
does: on a held-out set of random placements (\Cref{sec:confidence-experiments})
the Spearman rank correlation between $r_{\mathrm{rel}}$ and the peak error is $\rho = -0.018$ ($\rho = \pm 1$ for a perfectly monotone ranking), so $r_{\mathrm{rel}}$ orders predictions essentially at random and the v1 gate can no longer separate reliable from unreliable predictions.

\paragraph*{The hotspot-localized residual}
The gate does not need the whole-field error; it needs the error at the hotspot.
We therefore measure the residual only there.
\begin{definition}[Hotspot-localized residual]
\label{def:rhotk}
For a prediction $\widehat{\mat{T}}$ with residual $\mat{r} := \mat{A}_h\widehat{\mat{T}} - \mat{b}_h$, let
$S_m$ index the top-$m$ hottest cells of $\widehat{\mat{T}}$, for a fixed count $m$
(\Cref{tab:hyper}). Define
\begin{align}
r_{\mathrm{hot}}(\widehat{\mat{T}}) := \frac{1}{m}\sum_{y \in S_m} |\mat{r}(y)|,
\end{align}
the mean residual magnitude over the predicted hotspot.
\end{definition}
Computing $r_{\mathrm{hot}}$ costs one sparse matrix--vector product, against the same
$\mat{b}_h$ the surrogate is trained on. Restricting the residual to the hotspot
gives a matching a-posteriori bound (Appendix~\ref{app:bound}) in which the global
norm $\|\mat{A}_h^{-1}\|_2$ of \eqref{eq:rrel-bound} is replaced by the local
diagonal entry $(\mat{A}_h^{-1})_{x^*,x^*}$, which does \emph{not} grow with
$\kappa_2(\mat{A}_h)$. Like the v1 bound it is numerically loose, so $r_{\mathrm{hot}}$ ranks
predictions rather than certifying any one of them. Averaging over the top-$m$
cells, instead of the single hottest, keeps the ranking stable when the predicted
and true hotspots are close but not identical.

\paragraph*{Adaptive sliding-window flagging}
The scale of $r_{\mathrm{hot}}$ is not stationary: it shrinks as the search narrows and shifts again each time the surrogate is updated, so a fixed threshold calibrated once on $\mathcal{D}_{\mathrm{train}}$ would soon flag too many or too few proposals. We instead set the threshold relative to the residuals currently
observed, maintaining a first-in-first-out (FIFO) window $\mathcal{W}$ of the most recent
$r_{\mathrm{hot}}$ values and, while the solver is active, taking the threshold at search step
$t$ to be the $(1 - f_{\mathrm{target}})$-percentile of $\mathcal{W}$. This makes the threshold
self-calibrating: it tracks the moving $r_{\mathrm{hot}}$ distribution and
holds the flag rate near $f_{\mathrm{target}}$ regardless of search progress or model
updates. The window is bootstrapped by an initial block of no-flag search steps so
the threshold is well-defined once flagging begins.

\subsection{How to Call the Non-AI Solver: Warm-Started GMRES Refinement}
\label{sec:repair}

When a proposal is flagged, we want a peak estimate more accurate than the
surrogate but far cheaper than a cold solve. We run the inherited GMRES
refinement (\Cref{sec:v1}) on $\mat{A}_h\mat{T} = \mat{b}_h$ from the warm start
$\mat{T}_0 = \widehat{\mat{T}}$, with a Ruge--St\"uben algebraic-multigrid (AMG) V-cycle
preconditioner~\cite{ruge1987amg,stuben2001review}.
Because the warm prediction is already within a few K of the true field, the
refinement need not drive the residual far down, so we set a deliberately loose
relative tolerance $\epsilon$. In practice this tolerance is met after a single
V-cycle, which propagates information across all spatial scales and roughly
halves the warm-start residual. The cost is $\sim\!1$~s per
refinement versus $\sim\!16$~s for a tight solve (a $\sim\!15\!\times$ saving),
and the post-cycle peak temperature error is typically reduced by $1$--$2$~K,
enough to flip a borderline accept/reject decision before it is committed. Each
refined placement and its solved field $(\mat{u}', \mat{T})$ are then added to the
online buffer $\mathcal{D}_{\mathrm{online}}$, the labeled in-trajectory data on which the
surrogate is incrementally retrained (\Cref{sec:adapt}).

\subsection{Algorithm}
\label{sec:algorithm}

\Cref{alg:v2} consolidates the components above into the decision logic of one
optimization run; \Cref{fig:self_improve} gives the complementary data-and-model
view. The run proceeds in two phases: while $e < C$, every proposal passes through
the gate, and a flagged proposal is refined, buffered, and, once the buffer crosses
the next event threshold, drives an incremental-training event with model
selection; once $e$ reaches $C$, this block is skipped and the search continues on
the adapted surrogate alone. The peak estimate $\widehat{p}$ in the accept/reject
test holds the surrogate's predicted peak by default and the refined peak whenever
the proposal was refined. A single tight FVM solve at the end verifies the
best-so-far placement and returns the peak temperature the run reports.

\begin{algorithm}[!t]
\caption{DeepOHeat-v2 (one optimization run): decision logic.}
\label{alg:v2}
\begin{algorithmic}[1]
\Require Pretrained $G_{\boldsymbol{\theta}}$; train set $\mathcal{D}_{\mathrm{train}}$; acceptance schedule
$(\tau_0, \tau_{\min}, N)$; flag rate $f_{\mathrm{target}}$; window $|\mathcal{W}|$;
refine tolerance $\epsilon$; event trigger $\Delta N$; event budget $C$.
\State $\mathcal{D}_{\mathrm{online}} \!\gets\! \emptyset$; $\mat{u}^0 \!\sim\! \mathcal{U}_{\rm init}$;
       $\mathcal{W} \!\gets\! \emptyset$; event count $e \!\gets\! 0$.
\State Bootstrap $\mathcal{W}$ with $|\mathcal{W}|$ no-flag search steps.
\For{$t = 1, \ldots, N$}
  \State $\mat{u}' \gets \text{perturb}(\mat{u}^{t-1})$;\;
         $\widehat{\mat{T}} \gets G_{\boldsymbol{\theta}}(\mat{u}')$;\;
         $\widehat{p} \gets \|\widehat{\mat{T}}\|_\infty$.
  \If{$e < C$}\Comment{solver still active}
    \State Push $r_{\mathrm{hot}}(\widehat{\mat{T}})$ into $\mathcal{W}$.
    \If{$r_{\mathrm{hot}}(\widehat{\mat{T}}) > Q_{1-f_{\mathrm{target}}}(\mathcal{W})$}\Comment{flag at window percentile}
      \State $\mat{T} \gets \text{AMG-GMRES}(\mat{A}_h, \mat{b}_h, \mat{T}_0{=}\widehat{\mat{T}}, \epsilon)$.
      \State $\widehat{p} \gets \|\mat{T}\|_\infty$;\;
             $\mathcal{D}_{\mathrm{online}} \gets \mathcal{D}_{\mathrm{online}} \cup \{(\mat{u}', \mat{T})\}$.
      \If{$|\mathcal{D}_{\mathrm{online}}| \ge (e{+}1)\,\Delta N$}\Comment{event}
        \State $\boldsymbol{\theta}_{\rm pre} \gets \boldsymbol{\theta}$;\;
               incrementally train $\boldsymbol{\theta}_{\rm post}$ on $\mathcal{L}_{\mathrm{inc}}$.
        \State $\boldsymbol{\theta}_{\rm avg} \gets
               \tfrac{1}{2}(\boldsymbol{\theta}_{\rm pre} {+} \boldsymbol{\theta}_{\rm post})$.
        \State $\boldsymbol{\theta} \gets \arg\min_{s} \,
               \overline{|\Delta T_{\rm peak}|}(G_{\boldsymbol{\theta}_s}, \mathcal{D}_{\mathrm{online}}^{\rm val})$,
               $s\in\{\rm pre,post,avg\}$.
        \State $e \gets e + 1$.
      \EndIf
    \EndIf
  \EndIf
  \State Accept $\mat{u}'$ w.p.\
         $\min\bigl(1, \exp(-(\widehat{p}{-}\widehat{p}^{t-1})/\tau_t)\bigr)$.
\EndFor
\State Final tight FVM verify on the best-so-far placement.
\end{algorithmic}
\end{algorithm}

\section{Experiments}
\label{sec:experiments}

\subsection{The F2F Chiplet Benchmark}
\label{sec:benchmark-stack}

The benchmark is a $3\!\times\!3$~mm face-to-face (F2F) chiplet stack
(\Cref{fig:chip_model}): two bulk‑plus‑active‑silicon dies vertically stacked and bonded face‑to‑face through a hybrid Cu–Cu bond, mounted on an organic substrate. Cooling is
asymmetric, with a water cold plate at the top ($h = 5000$~W/m$^2$K) and weak
board conduction at the bottom ($h = 50$~W/m$^2$K). Copper TSVs, modeled as
$10\!\times\!10~\mu$m columns on a $50~\mu$m pitch, pass through the silicon layers. \Cref{tab:layers} lists the seven layers and their
conductivities, which span $0.5$ to $400$~W/m$\cdot$K, an $800\times$ contrast
between copper and the substrate. We discretize the stack on a uniform
$300\!\times\!300\!\times\!23$ grid ($10~\mu$m in plane;
$N \approx 2.07\!\times\!10^6$ cells), yielding the operator $\mat{A}_h$
of Eq.~\eqref{eq:fvm}.

Ten power blocks sit on the two dies and dissipate $\approx\!3.5$~W in total: four
hotspots (PHY, PLL, and two CPU cores) and six lower-power background blocks, all
listed in \Cref{tab:blocks}. The design places these blocks, so the
configuration $\mat{u}$ collects their $(x,y)$ positions; blocks on the same die
may not overlap, though cross-die overlap is allowed.

The high conductivity contrast makes the benchmark challenging for DeepOHeat-v1. It leaves the
discretized operator $\mat{A}_h$ ill-conditioned: a Lanczos estimate gives
$\kappa_2(\mat{A}_h) = 6.02\!\times\!10^4$ (computed in Appendix~\ref{app:bound}), far
above the regime in which DeepOHeat-v1 was demonstrated.

\begin{figure}[!t]
  \centering
  \includegraphics[width=\columnwidth]{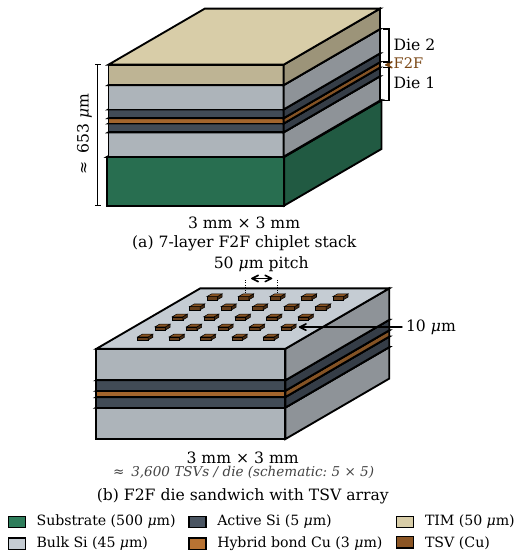}
  \caption{The F2F chiplet benchmark: two bulk-plus-active-Si dies vertically stacked and bonded face-to-face on an organic substrate, cooled by a top cold plate and weak bottom board conduction, with copper TSVs passing through the silicon layers.}
  \label{fig:chip_model}
\end{figure}

\begin{table}[!t]
\centering
\caption{Layer stack of the F2F chiplet benchmark (bottom $\to$ top); TIM is
the thermal interface material.}
\label{tab:layers}
\begin{tabular}{cl r r}
\toprule
\# & Layer & Thickness & $k$ (W/m$\cdot$K) \\
\midrule
1 & Substrate (BT/FR4)        & $500~\mu$m & $0.5$ \\
2 & Die 1 bulk Si (+ TSV Cu)  & $45~\mu$m  & $140 / 400$ \\
3 & Die 1 active Si (+ TSV)   & $5~\mu$m   & $140 / 400$ \\
4 & Hybrid bond (Cu--Cu)      & $3~\mu$m   & $100$ \\
5 & Die 2 active Si (+ TSV)   & $5~\mu$m   & $140 / 400$ \\
6 & Die 2 bulk Si (+ TSV Cu)  & $45~\mu$m  & $140 / 400$ \\
7 & TIM                       & $50~\mu$m  & $5$ \\
\bottomrule
\end{tabular}
\end{table}

\begin{table}[!t]
\centering
\caption{Power blocks on the two dies ($\approx\!3.5$~W total). PHY, PLL, and the two
CPU cores are hotspots; the rest are background blocks.}
\label{tab:blocks}
\begin{tabular}{llr}
\toprule
Die & Block & Power (W) \\
\midrule
1 & PHY    & $1.0$  \\
1 & PLL    & $0.6$  \\
1 & SRAM0  & $0.08$ \\
1 & SRAM1  & $0.08$ \\
1 & Analog & $0.08$ \\
2 & CPU0   & $0.7$  \\
2 & CPU1   & $0.7$  \\
2 & GPU    & $0.08$ \\
2 & Cache  & $0.08$ \\
2 & IO     & $0.08$ \\
\bottomrule
\end{tabular}
\end{table}

\subsection{Experimental Setup}
\label{sec:setup}

All experiments use the F2F benchmark of \Cref{sec:benchmark-stack} under a single
fixed protocol. We check the two contributions on a held-out set of $n=100$ random placements, a
loss/optimizer study for the data-free training recipe of \Cref{sec:pretrain} and
a catch-rate test for the $r_{\mathrm{hot}}$ trust gate of \Cref{sec:rhotk}, then put the
full loop through a simulated-annealing (SA) run from a random initial placement.

\paragraph*{Simulated-annealing schedule}
All optimization runs use the same SA loop~\cite{vanlaarhoven1987sa}: from a random
non-overlapping placement, each of $N = 1000$ iterations perturbs one block by
at most $\Delta_{\max} = 10$ grid steps and accepts the move with the
Metropolis criterion at an acceptance temperature $\tau_t$ that is
geometrically cooled,
\begin{align}
\label{eq:cooling}
\tau_t = \tau_0\,(\tau_{\min}/\tau_0)^{t/N},
\qquad \tau_0 = 5,\ \tau_{\min} = 0.01 .
\end{align}
The cost is $T_{\mathrm{peak}}(\mat{u})$, so each
iteration requires one thermal evaluation.

\paragraph*{Implementation}
The separable operator network (\Cref{sec:v1}) uses an $8$-layer branch of width
$256$, degree-$3$ ChebyKAN trunks of width $64$, and rank $r = 128$. It is
implemented in JAX/Equinox~\cite{equinox2022}; the FVM assembly, the tight oracle
solves, and the AMG-warm GMRES refinement use Ruge--St\"uben
algebraic-multigrid preconditioning via PyAMG~\cite{bell2022pyamg}. All
experiments run on a single NVIDIA RTX~3090.

\paragraph*{Self-improving-loop hyperparameters}
The trust-gate, refinement, and incremental-training hyperparameters of
\Cref{sec:pipeline} are collected in \Cref{tab:hyper}; they were set once and
held fixed across all runs.

\begin{table}[!t]
\centering
\caption{DeepOHeat-v2 self-improving-loop hyperparameters, fixed across all runs.}
\label{tab:hyper}
\small
\setlength{\tabcolsep}{6pt}
\begin{tabular}{lr}
\toprule
Quantity & Value \\
\midrule
Trust-window length ($|\mathcal{W}|$)                         & $100$ \\
Hotspot cells ($m$)                                & $100$ \\
Target flag rate ($f_{\mathrm{target}}$)           & $0.30$ \\
Refinement rel.\ tolerance ($\epsilon$)            & $5\!\times\!10^{-1}$ \\
\addlinespace
Loss weights ($\lambda_e,\ \lambda_d$)             & $0.1,\ 1$ \\
Incremental LRs ($\eta_{\rm muon},\ \eta_{\rm adam}$) & $5\!\times\!10^{-6},\ 1\!\times\!10^{-6}$ \\
Batch size (per stream)                            & $4$ \\
Gradient steps per event                           & $1000$ \\
\addlinespace
Refinements per event ($\Delta N$)                 & $50$ \\
Event budget ($C$)                                 & $3$ \\
Validation-slice size                              & $10$ \\
\bottomrule
\end{tabular}
\end{table}

\paragraph*{Evaluated configurations and metrics}
We compare four configurations that share the SA schedule above and
just-in-time (JIT)-compiled inference, differing only in the per-proposal cost policy.
\emph{Surrogate-only} drives SA with $G_{\boldsymbol{\theta}}$'s predicted peak and issues no solves during the search, with a single tight FVM verify at the end for the true peak; without the trust gate, it never flags or refines a proposal.
\emph{DeepOHeat-v1} adds flagging and AMG-warm GMRES refinement but never updates
$G_{\boldsymbol{\theta}}$; we deliberately give it the $r_{\mathrm{hot}}$ gate rather than its own
$r_{\mathrm{rel}}$, so anything v2 gains over it is the online adaptation. \emph{DeepOHeat-v2}
adds that adaptation (\Cref{sec:adapt}). \emph{Oracle SA} drives the identical
loop with a tight FVM solve at every step (tolerance $10^{-7}$,
AMG-preconditioned GMRES, $\sim\!16$~s per evaluation); it is a non-competing
ground-truth upper bound.
For each configuration we report the surrogate-predicted peak (the signal the
optimizer acts on), the true (oracle) peak of the returned placement under a
tight FVM solve, their difference (the surrogate--true gap, i.e.\ the prediction
error at the returned placement), and wall time.

\subsection{Pretraining: Loss Form and Optimizer}
\label{sec:pretrain-results}

Both design choices behind the data-free training recipe hold up here: the
energy-form loss and the Muon$^2$ optimizer. We pretrain the same separable surrogate on
$n_{\mathrm{data}} = 10{,}000$ unlabeled power maps (batch $32$, no temperature
labels) under either the strong-form residual or the energy-form loss, with
Adam, Muon, or Muon$^2$; a supervised control uses $n=100$ labeled pairs and a
pure MSE loss. Every loss-form and optimizer combination is in \Cref{tab:pretrain}, evaluated on
the $n=100$ held-out set after $40{,}000$ epochs, training times included. The
largest single change in the table is the loss form: under Adam, switching the
strong-form residual to the energy form cuts peak temperature error from
$30.88$~K to $1.05$~K, a $97\%$ drop. Both losses share the same FVM
discretization, so interface representation is identical and only the
conditioning differs, consistent with the $\kappa^2\!\to\!\kappa$ reduction of
\Cref{thm:energy-kappa}. The supervised MSE control, trained on $n=100$ labeled fields, reaches only
$9.81$~K, far short of the energy form's $1.05$~K under the same optimizer. Its labels are not free: each of
the $100$ fields is a tight solve, and closing the gap would need many more. The
data-free energy form reaches sub-Kelvin accuracy with no labels at all.

Within the energy form, the optimizer adds a further gain: Muon cuts the error
$39\%$ below Adam, and Muon$^2$ another $14\%$ ($1.05 \to 0.64 \to 0.55$~K). This matches the mechanism of \Cref{sec:muon2}: Muon supplies the joint rescaling Adam cannot, and Muon$^2$ additionally restores the smallest singular directions; hence Muon$^2$ is our default.

The strong-form rows show the failure the energy form avoids. Under that loss Adam plateaus
about $30$~K from the optimum and both Muon variants diverge. The divergence
comes from the NS5 orthogonalization, which loses its contraction at this
conditioning ($\kappa^2 \approx 3.6\!\times\!10^9$); the lower $\kappa$ of the energy
form avoids it.

\Cref{fig:predictions} shows the energy$+$Muon$^2$ surrogate on held-out
placements that span a range of peak temperatures: it reproduces each reference
field closely, locating the hotspot, with the largest residuals confined to block
edges. At $0.55$~K mean peak temperature error, this is the pretrained $G_{\boldsymbol{\theta}}$ used in
the rest of the experiments.

\begin{table*}[!t]
\centering
\caption{Pretraining results on the $n=100$ held-out set after $40{,}000$
epochs: mean peak temperature error, relative $L_2$ field error, field
mean-absolute error (MAE), and wall-clock training time. $\dagger$~training diverged; values are the error magnitude at termination.}
\label{tab:pretrain}
\small
\setlength{\tabcolsep}{6pt}
\begin{tabular}{llrrrr}
\toprule
Loss form & Optimizer & Peak temp.\ error (K) & Relative $L_2$ & MAE (K) & Train (min) \\
\midrule
Supervised MSE      & Adam      & $9.81$  & $1.28\!\times\!10^{-2}$ & $3.87$ & $20.0$ \\
Strong-form physics & Adam      & $30.88$ & $2.91\!\times\!10^{-2}$ & $8.76$ & $9.5$ \\
Strong-form physics & Muon      & $\sim\!2.2\!\times\!10^{11}\,\dagger$ & $5.84\!\times\!10^{8}$ & $\sim\!2.2\!\times\!10^{11}$ & $11.8$ \\
Strong-form physics & Muon$^2$  & $\sim\!3.6\!\times\!10^{10}\,\dagger$ & $1.18\!\times\!10^{8}$ & $\sim\!4.4\!\times\!10^{10}$ & $11.9$ \\
\midrule
Energy-form physics & Adam      & $1.05$  & $6.79\!\times\!10^{-4}$ & $0.179$ & $11.7$ \\
Energy-form physics & Muon      & $0.639$ & $4.18\!\times\!10^{-4}$ & $0.112$ & $13.3$ \\
\textbf{Energy-form physics} & \textbf{Muon$^2$} & $\mathbf{0.551}$ & $\mathbf{3.89\!\times\!10^{-4}}$ & $\mathbf{0.103}$ & $\mathbf{13.5}$ \\
\bottomrule
\end{tabular}
\end{table*}

\begin{figure*}[!t]
  \centering
  \includegraphics[width=0.9\textwidth]{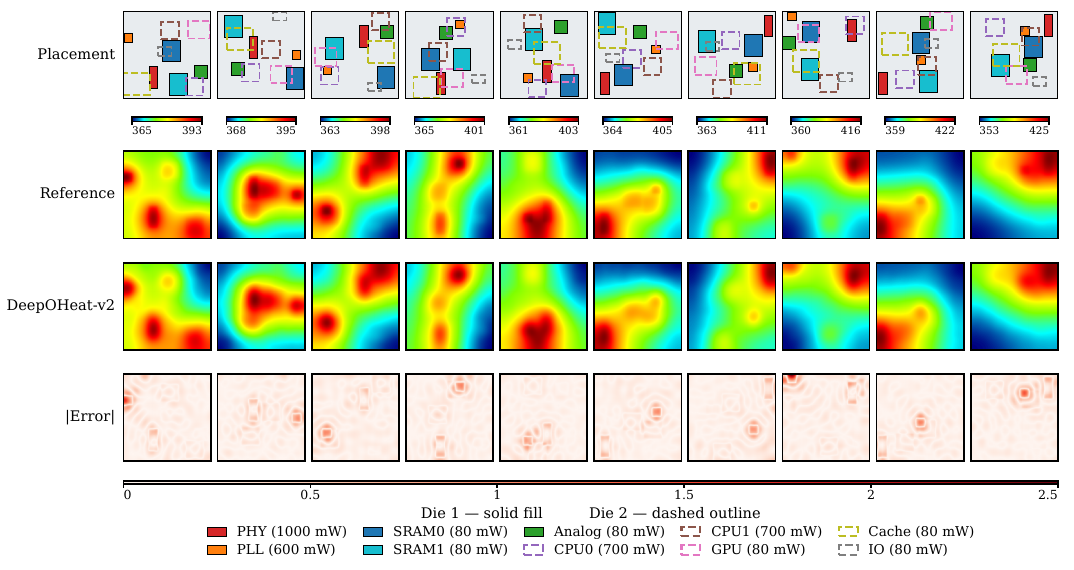}
  \caption{Energy$+$Muon$^2$ surrogate predictions on $10$ random held-out
  placements. Rows, top to bottom: block placement (Die~1 solid fill, Die~2
  dashed overlay) with reference and predicted peak temperatures; reference
  field $T_{\rm ref}$ at the Die-1 active layer; surrogate prediction $\widehat{\mat{T}}$
  at the same layer (shared color scale); absolute residual
  $|\widehat{\mat{T}} - T_{\rm ref}|$. The hotspot under PHY is captured consistently;
  residuals concentrate at block edges where in-plane gradients are steepest.}
  \label{fig:predictions}
\end{figure*}

\subsection{Confidence-Gate Validation}
\label{sec:confidence-experiments}

\begin{figure}[!t]
  \centering
  \includegraphics[width=0.99\columnwidth]{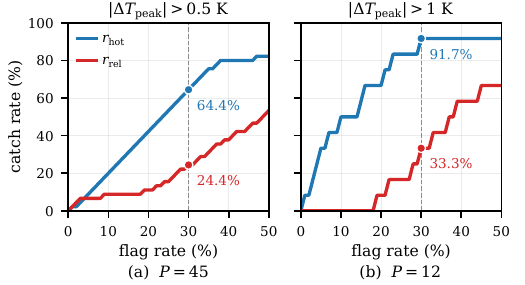}
  \caption{Catch-rate operating curves for $r_{\mathrm{hot}}$ vs.\ the global
  relative residual $r_{\mathrm{rel}}$ on the surrogate pretrained
  for $40{,}000$ epochs ($n = 100$ held-out placements). Each curve plots the fraction of bad
  predictions caught against the flag-rate budget; the diagonal is the random
  baseline. (a)~Bad $= |\Delta T_{\rm peak}|>0.5$~K ($P = 45$ positives).
  (b)~Bad $= |\Delta T_{\rm peak}|>1$~K ($P = 12$ positives). $r_{\mathrm{hot}}$
  dominates $r_{\mathrm{rel}}$ across the budget range.}
  \label{fig:confidence_roc}
\end{figure}

A surrogate that is accurate on average can still be wrong on individual
placements, so we test whether the $r_{\mathrm{hot}}$ gate flags those. On the
$n=100$ held-out set, $r_{\mathrm{hot}}$ ranks predictions by true error far better than the
global residual it replaces: its Spearman correlation with
peak temperature error is $\rho = +0.642$, against $\rho = -0.018$
for $r_{\mathrm{rel}}$. At a $30\%$ flag budget (\Cref{fig:confidence_roc}), $r_{\mathrm{hot}}$ catches
$64.4\%$ of $>\!0.5$~K errors and $91.7\%$ of $>\!1$~K errors, where $r_{\mathrm{rel}}$ stays
near the $30\%$ random baseline ($24.4\%$ and $33.3\%$).

The tail the gate catches is real, not an averaging artifact: $12\%$ of random
held-out placements have peak temperature error $>\!1$~K and $2\%$ exceed $2$~K,
and which placements they are cannot be known without a solve.

\subsection{Online Stability of the Adaptation Loop}
\label{sec:stability}

Two sanity checks before the integrated run: the adaptation loop never deploys an
update that worsens held-out error, and it does not forget the broad pretraining
distribution.

\paragraph*{Model selection}
At each event we compare the held-out
validation peak temperature error of $G_{\boldsymbol{\theta}_{\mathrm{pre}}}$, $G_{\boldsymbol{\theta}_{\mathrm{post}}}$, and
$G_{\boldsymbol{\theta}_{\mathrm{avg}}}$ and deploy the minimizer (\Cref{fig:stability}, left). At event~1 the
continued steps overshoot
on the still-broad early-SA validation slice, so $G_{\boldsymbol{\theta}_{\mathrm{pre}}}$ wins ($0.601$~K versus
$G_{\boldsymbol{\theta}_{\mathrm{post}}}$'s $0.700$~K) and the update is reverted; events~2 and~3 keep $G_{\boldsymbol{\theta}_{\mathrm{post}}}$ as
the validation slice concentrates on late-SA placements and the per-event error
scale drops from $\sim\!1$~K to $\sim\!0.2$~K. Without model selection the regressing
first event would have been committed.

\paragraph*{Anti-forgetting}
Held-out error on the broad random set drops slightly after adaptation, from the
pretrained $0.551$~K to $0.499$~K, so adapting to the trajectory does not cost
accuracy elsewhere. The
$\lambda_e\mathcal{L}_{\mathrm{energy}}(\mathcal{D}_{\mathrm{train}})$ term keeps the surrogate accurate on the broad
distribution while the data-MSE term fits the trajectory; the two do not conflict.

\begin{figure}[!t]
  \centering
  \includegraphics[width=0.99\columnwidth]{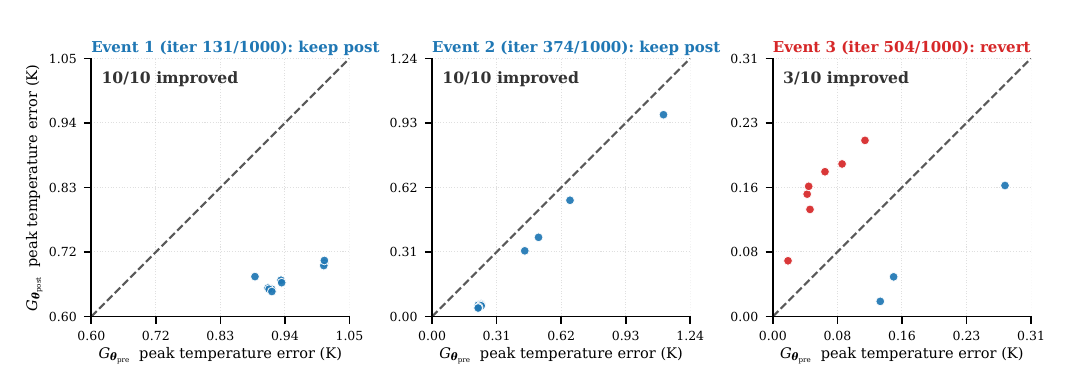}
  \caption{Online stability of DeepOHeat-v2 on the integrated run. \emph{Left:}
  model-selection trace; for each incremental-training event the $10$
  most-recent validation samples are plotted at $(G_{\boldsymbol{\theta}_{\mathrm{pre}}}$ error, $G_{\boldsymbol{\theta}_{\mathrm{post}}}$
  error$)$, with points below the diagonal improved and above regressed.
  Event~1 reverts to $G_{\boldsymbol{\theta}_{\mathrm{pre}}}$ (red), events~2--3 keep $G_{\boldsymbol{\theta}_{\mathrm{post}}}$ (blue); the
  per-event scale drops from $\sim\!1$~K to $\sim\!0.2$~K as SA cools.
  \emph{Right:} anti-forgetting; the post-pipeline held-out mean peak
  temperature error stays below the pretrained baseline, so the loop does not
  forget the broad distribution.}
  \label{fig:stability}
\end{figure}

\subsection{Integrated Run: Optimization Quality and Cost}
\label{sec:integrated}

From a random initial placement, the full loop reaches a design whose true peak is
within $0.11$~K of what its own surrogate predicted, in $292$~s, about $56\times$
faster than solving at every step and with a bounded number of FVM solves.

\paragraph*{The surrogate--true gap}
The surrogate--true gap shrinks across the three configurations, from $1.12$~K
for surrogate-only, to $0.78$~K for DeepOHeat-v1, to $0.11$~K for DeepOHeat-v2
(\Cref{tab:case-study}). The gap is prediction error the optimizer never sees during the search, not a
quality ranking of the placements. Under-prediction is the dangerous direction, since it lets the
search accept a placement that is hotter than it looks; it dominates the
surrogate-only and v1 gaps, while v2's gap is a small over-prediction. Each gap has a mechanism: surrogate-only never sees its own error; v1 catches and
refines bad proposals but leaves the surrogate fixed; only v2 adapts it, and that
closes the gap to oracle level.

\paragraph*{Per-configuration results}
The run starts at a peak of $397.78$~K; \Cref{tab:case-study} and \Cref{tab:cost}
report the per-configuration outcomes.
With no gate to warn it, surrogate-only commits its $1.12$~K error undetected;
v1 and v2 gate every proposal and refine the flagged ones. Oracle SA appears
only as the reference row: solving tightly at every step, it has no
surrogate--true gap and optimizes on ground truth throughout.

\begin{table}[!t]
\centering
\caption{Predicted vs.\ true peak on the integrated run from a random initial
placement. The gap is the surrogate--true difference at the returned placement;
Oracle SA is the non-competing zero-gap reference.}
\label{tab:case-study}
\small
\setlength{\tabcolsep}{6pt}
\begin{tabular}{lrrr}
\toprule
Configuration & \shortstack[r]{Predicted\\peak (K)} & \shortstack[r]{True/oracle\\peak (K)} & \shortstack[r]{Gap\\(K)} \\
\midrule
Surrogate-only        & $387.67$          & $388.79$          & $1.12$          \\
DeepOHeat-v1          & $387.25$          & $388.03$          & $0.78$          \\
\textbf{DeepOHeat-v2} & $\mathbf{387.15}$ & $\mathbf{387.04}$ & $\mathbf{0.11}$ \\
\midrule
Oracle SA (ref)       & ---               & $387.36$          & $0.00$          \\
\bottomrule
\end{tabular}
\end{table}

\paragraph*{Cost}
DeepOHeat-v2 reaches the oracle's placement quality for far less compute: $292$~s
against Oracle SA's $4.51$~h, with all four configurations within $\sim\!1.8$~K of
one another on the absolute true peak (\Cref{tab:cost}). The $0.32$~K by which v2's true peak ($387.04$~K)
sits below Oracle SA's ($387.36$~K) is not surrogate superiority: SA is a
stochastic search whose accept/reject sequence is perturbed by prediction noise,
so the two trajectories settle in slightly different basins.

\begin{table}[!t]
\centering
\caption{True (oracle) peak, FVM solves issued during SA, and wall time on the
integrated run; ``FVM solves in SA'' excludes the single final verification
solve.}
\label{tab:cost}
\small
\setlength{\tabcolsep}{4pt}
\begin{tabular}{lrrr}
\toprule
Configuration & \shortstack[r]{True peak\\(K)} & \shortstack[r]{FVM solves\\in SA} & Wall \\
\midrule
Surrogate-only        & $388.79$          & $0$            & $40$~s \\
DeepOHeat-v1          & $388.03$          & $\sim\!300$    & $335$~s \\
\textbf{DeepOHeat-v2} & $\mathbf{387.04}$ & $\mathbf{150}$ & $\mathbf{292}$~s \\
\midrule
Oracle SA (ref)       & $387.36$          & $1000$         & $4.51$~h \\
\bottomrule
\end{tabular}
\end{table}

\paragraph*{Optimized placements}
\Cref{fig:optimized_placements} shows the best placement each configuration returns
alongside its oracle FVM field at the Die-1 active layer: surrogate-only leaves PHY near the chip edge, producing
a hotspot it under-predicts; v1's refinement moves the placement inward; and v2's
adapted surrogate finds a PHY position whose oracle field is visually
indistinguishable from Oracle SA's; the $0.32$~K between their peaks is basin
scatter, not surrogate error.

\begin{figure*}[!t]
  \centering
  \includegraphics[width=0.85\textwidth]{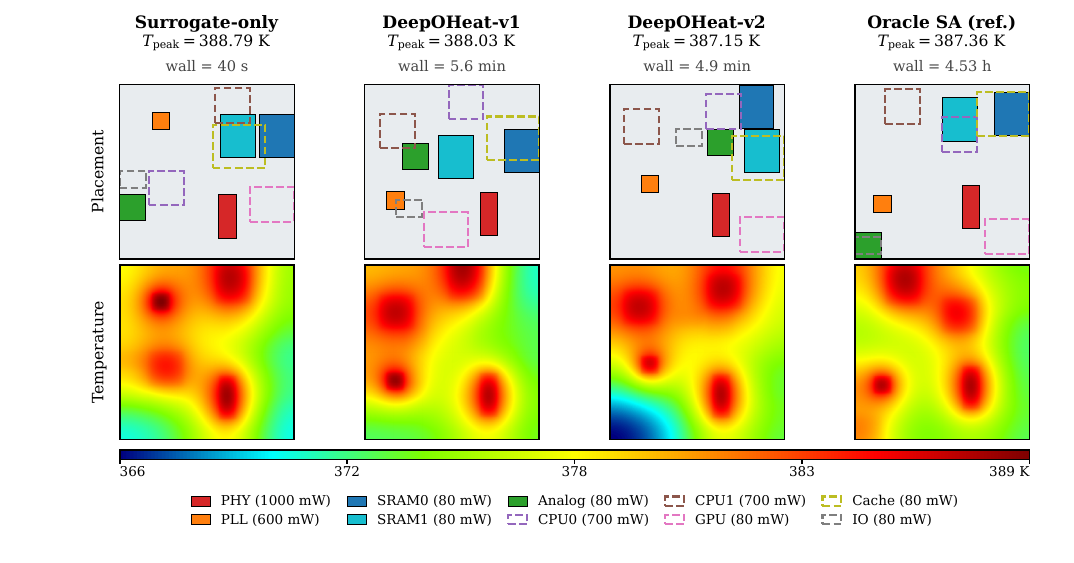}
  \caption{Optimized placements and oracle FVM temperature fields at the Die-1
  active layer for the four configurations on the integrated run. Row~1:
  placement with Die~1 solid outlines and Die~2 dashed overlays. Row~2: oracle
  FVM field, shared color scale. DeepOHeat-v2 matches Oracle SA in placement
  structure and, to within the SA basin scatter ($\sim\!0.3$~K), in peak temperature.}
  \label{fig:optimized_placements}
\end{figure*}

\section{Conclusion}
\label{sec:conclusion}

We have presented DeepOHeat-v2, a self-improving operator-learning framework for
thermal-aware placement optimization in 3D-IC design. A data-free thermal surrogate must clear two hurdles before it can steer an
optimizer: it must first be trainable without labels on
high-contrast multi-die stacks, and it must then stay accurate as the search
drives placements away from the distribution it was trained on. To make
data-free training converge, we cast the physics loss in energy form, which
provably reduces the prediction-space loss-Hessian conditioning from
$\kappa_2(\mat{A}_h)^2$ to $\kappa_2(\mat{A}_h)$, and pair it with a matrix-preconditioned
optimizer that succeeds where first-order physics-informed training otherwise
stalls. Holding that accuracy under the distribution shift of
optimization falls to the self-improving loop: the hotspot trust gate sends
flagged placements to the reference solver,
the solutions feed back into the surrogate through incremental training, and model
selection admits only updates that lower held-out trajectory error. The two mechanisms reinforce each other: the
solver calls spent on uncertain designs train the surrogate on the placements the
search visits next. On an integrated run from a random
initial placement, the loop drives the gap between the surrogate's predicted and
true peak temperature at the returned design to $0.11$~K, oracle quality,
$\sim\!56\times$ faster than solving at every step.

The main limitation of DeepOHeat-v2 is geometric, and it stems from the surrogate
architecture rather than the training method. The separable operator network inherited from DeepOHeat-v1 factorizes the
temperature field along the coordinate axes, so it is tied to axis-aligned,
grid-resolved structures. The through-silicon vias here are square,
Cartesian-aligned copper columns; round, tapered, or off-grid vias, irregular die
outlines, and non-Manhattan blockages would need local mesh refinement or a
mesh-agnostic architecture.

The matrix-free FVM loss (\Cref{sec:fvm-loss}) reads only
the predicted field on the grid and takes its parameter gradient from a single
reverse-mode pass, with no second-order automatic differentiation, so its cost is
independent of how the field is produced. A
non-separable or mesh-agnostic surrogate could therefore be trained with the same
loss without reintroducing that cost, paired with a geometry-conforming (locally
refined or cut-cell) discretization to resolve curved and off-grid features. A
complementary direction is to start from a stronger pretrained backbone, such as
an emerging thermal foundation model (Therm-FM~\cite{huang2026therm} adapts a
pretrained partial differential equation (PDE) foundation model to 3D-IC thermal simulation), on which the online
adaptation studied here would build.

The adapt-during-search principle behind this loop is not specific to thermal
analysis or to the surrogate it improves. Wherever an optimizer must repeatedly
query an expensive solver, the queries it already pays for can be reused as a
training signal that improves a surrogate where the search concentrates,
so the solver is queried less as the run proceeds. We expect the
mechanism to transfer to other solver-in-the-loop design-optimization problems in
electronic design automation.

\appendices

\section{Operator Conditioning}
\label{app:continuous}

\subsection{Conditioning of the Continuous Operator}
We show that the ill-conditioning of $\mat{A}_h$ is inherited from the continuous
heat-conduction operator (\Cref{sec:fvm-loss}), by bounding the condition number of
the continuous operator itself. Let
$V = H^1(\Omega)$ and write the governing equation \eqref{eq:strong}, with
conductivity $k$, volumetric source $q_V$, ambient temperature $T_{\mathrm{amb}}$, and top
and bottom film coefficients $h_{\rm top}, h_{\rm bot}$, in weak form: find
$T \in V$ with $a(T, v) = \ell(v)$ for all $v \in V$, where
\begin{align}
a(T, v) &:= \int_\Omega k(\mat{y})\,\nabla T \cdot \nabla v\,d\mat{y} \nonumber \\
        &\quad + \sum_{\square \in \{\rm top, bot\}}
             \int_{\Gamma_\square} h_\square\,(T-T_{\mathrm{amb}})\,v\,dS, \\
\ell(v) &:= \int_\Omega q_V\,v\,d\mat{y},
\end{align}
and abbreviate $\bar h := \max(h_{\rm top}, h_{\rm bot})$.

\begin{lemma}[Two-sided operator conditioning]
\label{lem:kappa-cont}
Let $L : V \to V^*$ be the bounded linear operator induced by $a(\cdot,\cdot)$
on $(V, \|\cdot\|_{H^1})$. There exist geometry-dependent constants
$c_0(\Omega), C_{\mathrm{tr}}, C_{\mathrm{PF}}(\Omega) > 0$ such that
\begin{align}
\tfrac{k_{\max}}{k_{\min}}\,c_0(\Omega)
\;\le\; \kappa(L) \;\le\;
\frac{k_{\max} + C_{\mathrm{tr}}\,\bar h}{k_{\min}}\,
\bigl(1 + C_{\mathrm{PF}}(\Omega)\bigr).
\end{align}
\end{lemma}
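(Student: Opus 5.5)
Write the bilinear form as $a(T,v)=a_0(T,v)$ with the homogeneous Robin part, where the $T_{\mathrm{amb}}$ term is moved into $\ell$ so that $L$ is linear. Then bound $\kappa(L)=\|L\|\,\|L^{-1}\|$ by a continuity constant $M$ and a coercivity constant $\alpha$. Because $a$ is symmetric and coercive, Lax--Milgram gives $\|L\|\le M$ and $\|L^{-1}\|\le 1/\alpha$, and the upper bound follows from $\kappa(L)\le M/\alpha$.

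For continuity, Cauchy--Schwarz gives $|\int_\Omega k\nabla T\cdot\nabla v|\le k_{\max}\|T\|_{H^1}\|v\|_{H^1}$. The trace inequality $\|v\|_{L^2(\Gamma_\square)}\le C_{\mathrm{tr}}^{1/2}\|v\|_{H^1}$ bounds each Robin term by $h_\square C_{\mathrm{tr}}\|T\|_{H^1}\|v\|_{H^1}$. Together these give $M=k_{\max}+C_{\mathrm{tr}}\bar h$, after absorbing the factor from summing over the two faces into $C_{\mathrm{tr}}$.

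For coercivity, $a(v,v)\ge k_{\min}\|\nabla v\|^2+h_{\min}\|v\|^2_{L^2(\Gamma_{\rm top}\cup\Gamma_{\rm bot})}$. A Friedrichs/Poincaré-type inequality with boundary term gives $\|v\|_{L^2}^2\le C_{\mathrm{PF}}(\|\nabla v\|^2+\|v\|^2_{L^2(\Gamma)})$, so $\|v\|_{H^1}^2\le(1+C_{\mathrm{PF}})(\|\nabla v\|^2+\|v\|_{L^2(\Gamma)}^2)$. To obtain the factor $1/k_{\min}$ with no $h$ in the denominator, I will assume the nondimensionalization $h_{\min}\gtrsim k_{\min}$, which holds on the benchmark since $h\ge 50$ in the paper's units. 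Otherwise I will fold $\max(1,k_{\min}/h_{\min})$ into $C_{\mathrm{PF}}(\Omega)$, since $h$ is fixed data. This yields $\alpha\ge k_{\min}/(1+C_{\mathrm{PF}})$ and hence the stated upper bound.

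The lower bound is the delicate step. I will use the Rayleigh-quotient characterization: $\|L\|\ge a(v,v)/\|v\|_{H^1}^2$ and $\|L^{-1}\|^{-1}\le a(w,w)/\|w\|_{H^1}^2$, both valid for symmetric coercive forms. I will pick test functions localized in the material regions. Let $v$ be a smooth bump supported in the interior of the region where $k=k_{\max}$ (a copper TSV), oscillating enough that $\|\nabla v\|\ge\tfrac12\|v\|_{H^1}$; since the support avoids $\Gamma$, the Robin term vanishes, so $a(v,v)/\|v\|^2_{H^1}\ge k_{\max}/4$. Likewise let $w$ be a bump supported in the interior of the region where $k=k_{\min}$ (the substrate). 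Its Robin term also vanishes, so $a(w,w)/\|w\|_{H^1}^2\le k_{\min}$. Dividing gives $\kappa(L)\ge k_{\max}/(4k_{\min})$, so $c_0(\Omega)=1/4$ up to the shape of the chosen bumps.

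The main obstacle is ensuring that both regions have nonempty interior and that the test functions avoid the Robin faces. If the $k_{\min}$ region touches $\Gamma_{\rm bot}$, a compactly supported bump still works. The Poincaré/trace constants absorb the geometric dependence, so I expect no further difficulty.
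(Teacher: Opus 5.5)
Your proposal is correct and follows the paper's proof sketch essentially step for step. The upper bound comes from continuity via the trace inequality and coercivity via a Robin--Poincar\'e inequality, and the lower bound comes from Rayleigh quotients of two test functions localized in the $k_{\max}$ and $k_{\min}$ regions. Your explicit remark that the coercivity constant is really $\min(k_{\min},h_{\min})/(1+C_{\mathrm{PF}})$, so $C_{\mathrm{PF}}$ must absorb $k_{\min}/h_{\min}$ unless $h$ is large enough, exposes an assumption the paper's sketch leaves implicit; without it the stated upper bound fails as $h\to 0$.
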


\begin{proof}[Proof sketch]
The upper bound follows from continuity,
$a(T,T) \le (k_{\max} + C_{\mathrm{tr}}\,\bar h)\,\|T\|^2_{H^1}$ (trace theorem
for the boundary term), and coercivity from a Robin--Poincar\'e inequality:
because the Robin term penalizes the boundary trace of $T$, the gradient energy
and that term together control the full norm,
$\|T\|^2_{H^1} \le C_{\mathrm{PF}}\bigl(\|\nabla T\|^2_{L^2} + \|T\|^2_{L^2(\Gamma)}\bigr)$,
so the constant mode is no longer in the null space (a homogeneous Poincar\'e
inequality on $V$ would not hold). The lower bound uses two test functions
concentrated in the high-$k$ and low-$k$ regions: the ratio of their Rayleigh
quotients $a(T,T)/\|T\|^2_{H^1}$ scales as $k_{\max}/k_{\min}$, bounding $\kappa(L)$
below by that contrast factor.
\end{proof}

With the benchmark contrast $k_{\max}/k_{\min} = 800$ and $c_0(\Omega) \sim 50$--$100$,
the lower bound gives $\kappa(L) \gtrsim 4$--$8\!\times\!10^4$, matching the measured
$\kappa_2(\mat{A}_h) = 6.02\!\times\!10^4$ (Appendix~\ref{app:bound}) in order of
magnitude. The discrete conditioning is thus intrinsic to the contrast, not a mesh
artifact, and mesh refinement does not reduce it.

\subsection{Parameter-Space Transfer of the Conditioning Reduction}
\label{app:pullback}
We make precise the transfer of \Cref{thm:energy-kappa} from
prediction space to the parameter space the optimizer acts in
(\Cref{sec:energy-form}). Let $\mat{J} := \partial\widehat{\mat{T}}/\partial\boldsymbol\theta$ be the
network output Jacobian. On the parameter directions that change the prediction,
the loss curvature is the pullback $\mat{J}^\top\!\bigl(\nabla_{\widehat{\mat{T}}}^2\mathcal{L}\bigr)\mat{J}$.
Where $\mat{J}$ has full column rank, $\kappa\!\bigl(\mat{J}^\top \mat{H} \mat{J}\bigr) \le
\kappa(\mat{J})^2\,\kappa(\mat{H})$ for any SPD $\mat{H}$. Taking $\mat{H} = \nabla_{\widehat{\mat{T}}}^2\mathcal{L}$,
\Cref{thm:energy-kappa} reduces $\kappa(\mat{H})$ from $\kappa_2(\mat{A}_h)^2$ to
$\kappa_2(\mat{A}_h)$, so the parameter-space bound tightens by the same factor; the
Jacobian term $\kappa(\mat{J})^2$ is identical for both losses and is unchanged by the
rewrite.

\section{The Hotspot-Localized Bound and Operator Numerics}
\label{app:bound}

\subsection{Derivation of the Bound}
We derive the bound of \Cref{sec:rhotk}: localizing the residual replaces the
global norm $\|\mat{A}_h^{-1}\|_2$ in the error prefactor by the local diagonal entry
$(\mat{A}_h^{-1})_{x^*,x^*}$, which does not scale with $\kappa_2(\mat{A}_h)$.

\begin{proposition}[Hotspot-localized a-posteriori bound]
\label{prop:rhotk}
Write $\mat{A}_h^{-1}$ for the discrete Green's function and
$x^* := \arg\max_y \widehat{\mat{T}}(y)$. Assume
\textbf{(A1) hotspot coincidence}, $\arg\max_y\widehat{\mat{T}}(y) = \arg\max_y \mat{T}(y)$,
so that $x^*$ is also the reference hotspot; and
\textbf{(A2) hotspot-row diagonal dominance}, $(\mat{A}_h^{-1})_{x^*,x^*}$ is the row
maximum of $\mat{A}_h^{-1}$ at $x^*$. Then
\begin{align}
\label{eq:rhotk-bound}
|e(x^*)| \;\le\;
\underbrace{(\mat{A}_h^{-1})_{x^*,x^*}\,m\,r_{\mathrm{hot}}}_{\text{signal term}}
\;+\;
\underbrace{\bigl\|(\mat{A}_h^{-1})_{x^*,\,\cdot\notin S_m}\bigr\|_2\,\|\mat{r}\|_2}_{\text{tail term}}.
\end{align}
\end{proposition}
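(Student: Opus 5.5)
The bound follows from the exact error identity $\mat{e} = \mat{A}_h^{-1}\mat{r}$, which holds because $\mat{A}_h\widehat{\mat{T}} - \mat{b}_h = \mat{A}_h(\widehat{\mat{T}} - \mat{T})$ with $\mat{T} = \mat{A}_h^{-1}\mat{b}_h$. Reading off the row at $x^*$ gives
\begin{align}
e(x^*) = \sum_{y} (\mat{A}_h^{-1})_{x^*,y}\, \mat{r}(y)
       = \sum_{y \in S_m} (\mat{A}_h^{-1})_{x^*,y}\, \mat{r}(y)
       + \sum_{y \notin S_m} (\mat{A}_h^{-1})_{x^*,y}\, \mat{r}(y).
\end{align}
The first sum runs over the top-$m$ predicted-hot cells of Definition~\ref{def:rhotk}, and the second over the remaining cells. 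The triangle inequality then bounds $|e(x^*)|$ by the sum of the absolute values of the two pieces, and I handle each piece separately.

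For the hotspot piece I use (A2). Because $x^*$ is the argmax of $\widehat{\mat{T}}$, it lies in $S_m$, so its row of $\mat{A}_h^{-1}$ is the one in question. By (A2), $|(\mat{A}_h^{-1})_{x^*,y}| \le (\mat{A}_h^{-1})_{x^*,x^*}$ for every $y$. This needs the entries of $\mat{A}_h^{-1}$ to be nonnegative, and they are: the FVM matrix with harmonic-mean conductances and Robin terms is an irreducibly diagonally dominant Z-matrix, hence a nonsingular M-matrix with entrywise nonnegative inverse. Consequently
\begin{align}
\Bigl|\sum_{y\in S_m} (\mat{A}_h^{-1})_{x^*,y}\,\mat{r}(y)\Bigr|
\le (\mat{A}_h^{-1})_{x^*,x^*} \sum_{y\in S_m} |\mat{r}(y)|
= (\mat{A}_h^{-1})_{x^*,x^*}\, m\, r_{\mathrm{hot}},
\end{align}
which is exactly the signal term. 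For the tail piece I apply Cauchy--Schwarz on the index set $y\notin S_m$. This gives $\|(\mat{A}_h^{-1})_{x^*,\cdot\notin S_m}\|_2\,\|\mat{r}_{\cdot\notin S_m}\|_2$, and I then bound the restricted residual norm by $\|\mat{r}\|_2$. Adding the two pieces yields \eqref{eq:rhotk-bound}.

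The algebra itself is routine, so the main thing to get right is how the hypotheses enter. (A2) is doing real work, and the nonnegativity of $\mat{A}_h^{-1}$ must be justified from the M-matrix structure rather than assumed; I would cite the standard FVM monotonicity property for this.

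(A1) is not actually needed for the inequality as stated. Its role is interpretive: it lets $e(x^*)$ stand for the peak error, via $|\widehat{T}_{\mathrm{peak}} - T_{\mathrm{peak}}| = |e(x^*)|$. Without (A1) one only gets $|\widehat{T}_{\mathrm{peak}} - T_{\mathrm{peak}}| \le \max\{|e(x^*)|, |e(\tilde x)|\}$, where $\tilde x$ is the true hotspot. I would state this remark after the proof.

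I would also note that the signal prefactor is a single diagonal Green's-function entry. It is bounded by the inverse of the smallest local conductance-plus-Robin scale and does not involve $\lambda_{\min}(\mat{A}_h)$, which is why it does not grow with $\kappa_2(\mat{A}_h)$. That claim is heuristic, and I would flag it as such.
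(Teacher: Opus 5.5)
Your proposal is correct and follows essentially the same route as the paper: split the hotspot row $e(x^*) = \sum_y (\mat{A}_h^{-1})_{x^*,y}\,\mat{r}(y)$ on and off $S_m$, bound the signal part using M-matrix nonnegativity of $\mat{A}_h^{-1}$ together with (A2), and bound the tail part by Cauchy--Schwarz. Your remark that (A1) is needed only to identify $|e(x^*)|$ with the peak temperature error also matches how the paper uses that assumption.
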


The proof reads off the hotspot row of the error
$\mat{e} := \widehat{\mat{T}} - \mat{T} = \mat{A}_h^{-1}\mat{r}$ and splits it on and off $S_m$; under
\textbf{(A1)}, $|e(x^*)|$ is the peak temperature error. Splitting the $x^*$ row,
\begin{align}
\label{eq:e-row}
e(x^*) &= \sum_y (\mat{A}_h^{-1})_{x^*,y}\,\mat{r}(y) \nonumber \\
       &= \underbrace{\sum_{y \in S_m} (\mat{A}_h^{-1})_{x^*,y}\,\mat{r}(y)}_{\text{signal}}
       + \underbrace{\sum_{y \notin S_m} (\mat{A}_h^{-1})_{x^*,y}\,\mat{r}(y)}_{\text{tail}}.
\end{align}

\paragraph*{Signal term (on $S_m$, via A2)}
The operator $\mat{A}_h$ is an SPD M-matrix: the harmonic-mean FVM stencil yields
nonpositive off-diagonal entries and a strictly positive, Robin-augmented
diagonal, so its inverse $\mat{A}_h^{-1}$ has nonnegative entries (the
thermal-spreading kernel, decaying away from the source) and the bounds below
carry no absolute-value bars. Assumption \textbf{(A2)} makes the hotspot
diagonal $(\mat{A}_h^{-1})_{x^*,x^*}$ the maximum of the $x^*$ row, so
$(\mat{A}_h^{-1})_{x^*,y} \le (\mat{A}_h^{-1})_{x^*,x^*}$ for every $y$, and in particular
for $y \in S_m$. Summing over $S_m$ and applying the triangle inequality gives
\begin{align}
\Bigl|\sum_{y \in S_m} (\mat{A}_h^{-1})_{x^*,y}\,\mat{r}(y)\Bigr|
&\le (\mat{A}_h^{-1})_{x^*,x^*}\sum_{y\in S_m}|\mat{r}(y)| \nonumber \\
&= (\mat{A}_h^{-1})_{x^*,x^*}\,m\,r_{\mathrm{hot}},
\end{align}
using $r_{\mathrm{hot}} = \tfrac1m\sum_{y\in S_m}|\mat{r}(y)|$ from
Definition~\ref{def:rhotk}.

\paragraph*{Tail term (on $S_m^c$, via Cauchy--Schwarz)}
Applying Cauchy--Schwarz to the row of $\mat{A}_h^{-1}$ restricted to $S_m^c$,
\begin{align}
\Bigl|\sum_{y \notin S_m} (\mat{A}_h^{-1})_{x^*,y}\,\mat{r}(y)\Bigr|
\le \bigl\|(\mat{A}_h^{-1})_{x^*,\,\cdot\notin S_m}\bigr\|_2\,\|\mat{r}\|_2.
\end{align}
Adding the signal and tail bounds to the decomposition \eqref{eq:e-row} yields
the localized bound \eqref{eq:rhotk-bound}, completing the proof. The two
assumptions on which it rests, \textbf{(A1)} and \textbf{(A2)}, are verified
numerically below.

\paragraph*{The $92\times$ coefficient}
With the operator numerics reported below, namely
$(\mat{A}_h^{-1})_{x^*,x^*} = 298$~K/W, $\|\mat{A}_h^{-1}\|_2 = 3.4\!\times\!10^6$~K/W, and
$\|(\mat{A}_h^{-1})_{x^*,\,\cdot\notin S_m}\|_2 = 3.69\!\times\!10^4$~K/W, the
localized bound evaluates to $\approx 1.7\!\times\!10^3$~K, versus
$\approx 1.6\!\times\!10^5$~K for the global relative-residual bound
\eqref{eq:rrel-bound}. The $\sim\!92\times$ gain is the ratio of the
\emph{bound coefficients} that multiply $\|\mat{r}\|_2$, namely
$\|\mat{A}_h^{-1}\|_2 / \|(\mat{A}_h^{-1})_{x^*,\,\cdot\notin S_m}\|_2 =
3.4\!\times\!10^6 / 3.69\!\times\!10^4 \approx 92$. The localization replaces the global operator norm
$\|\mat{A}_h^{-1}\|_2$ (which scales with $\kappa_2(\mat{A}_h)$) with the local diagonal entry
$(\mat{A}_h^{-1})_{x^*,x^*}$ (which does not). As noted in \Cref{sec:rhotk}, both bounds are numerically loose against the
observed $0.5$--$3$~K errors and serve to identify which operator quantity
controls the error. The detection power of $r_{\mathrm{hot}}$ is established empirically,
through the Spearman correlation and catch rate of
\Cref{sec:confidence-experiments}.

\subsection{Operator Numerics and Assumption Verification}
\label{app:numerics}
We report the numerics computed on the assembled operator $\mat{A}_h$: an
estimate of its condition number $\kappa_2(\mat{A}_h)$, used throughout the paper, and
a check of the two assumptions of Proposition~\ref{prop:rhotk}.

\paragraph*{Conditioning estimate}
We estimate $\kappa_2(\mat{A}_h)$ by Lanczos iteration on the assembled SPD operator:
the largest eigenvalue from a forward Lanczos run and the smallest from
inverse iteration (AMG-preconditioned solves), with the ratio
$\kappa_2(\mat{A}_h) = \lambda_{\max}/\lambda_{\min} = 6.02\!\times\!10^4$ converging
to three significant figures within a few hundred Lanczos steps. The
strong-form loss-Hessian conditioning $\kappa^2 := \kappa_2(\mat{A}_h)^2 \approx
3.6\!\times\!10^9$ is then derived from this value as in
\Cref{sec:fvm-loss}, not measured separately.

\paragraph*{Verification of A1 and A2}
The two assumptions of Proposition~\ref{prop:rhotk} are checked on the same $n=100$
held-out set used in \Cref{sec:confidence-experiments}. \textbf{(A1) hotspot
coincidence:} the predicted top-$1$ cell coincides with the reference top-$1$
cell, or the reference hotspot lies inside the predicted top-$m$ set $S_m$, on the
great majority of the held-out placements; the remaining
near-misses are absorbed by the top-$m$ aggregation, so detection is reported
via the catch-rate curve rather than the bound. \textbf{(A2) hotspot-row diagonal dominance:} we verify
directly on the assembled operator that $(\mat{A}_h^{-1})_{x^*,x^*}$ is the maximum
entry of the $x^*$ row of $\mat{A}_h^{-1}$, consistent with the thermal-spreading
kernel decaying away from the source; the operator numerics quoted above ($(\mat{A}_h^{-1})_{x^*,x^*}=298$~K/W and the tail-row norm) are read
off the same computation.

\bibliographystyle{IEEEtran}
\bibliography{references}

\end{document}